\newcommand{\argmax}{\mathrm{argmax}}
\newcommand{\poly}{\mathrm{poly}}
\def\cA{\mathcal{A}}
\def\cD{\mathcal{D}}
\def\cE{\mathcal{E}}
\def\cS{\mathcal{S}}
\newcommand{\abs}[1]{\left|#1\right|}
\newcommand{\expect}{\mathbb{E}}
\newcommand{\indict}{\mathbb{I}}
\newcommand{\states}{\mathcal{S}}
\newcommand{\trans}{P}
\newcommand{\actions}{\mathcal{A}}
\newcommand{\mdp}{M}
\newcommand{\mdps}{\mathcal{M}}
\newtheorem{thm}{Theorem}[section]
\newtheorem{theorem}{Theorem}[section]
\newtheorem{lemma}[theorem]{Lemma}
\newtheorem{asmp}{Assumption}[section]
\newtheorem{defn}{Definition}[section]
\newtheorem{definition}{Definition}[section]
\theoremstyle{definition}
\title{
Is Long Horizon Reinforcement Learning More Difficult\\
Than Short Horizon Reinforcement Learning?
	}
\date{}
\author{
	 Ruosong Wang\thanks{Equal contribution}\\
	 Carnegie Mellon University \\
	 \texttt{ruosongw@andrew.cmu.edu}\\
\and Simon S. Du$^*$
\\
Institute for Advanced Study\\
\texttt{ssdu@ias.edu}\\
\and Lin F. Yang$^*$ \\
University of California, Los Angles\\
\texttt{linyang@ee.ucla.edu}\\
\and Sham M. Kakade \\	
University of Washington, Seattle and Microsoft Research\\
\texttt{sham@cs.washington.edu}
	}
\begin{document}
\maketitle

\begin{abstract}

Learning to plan for long horizons is a central challenge in episodic
reinforcement learning problems. A fundamental question is to
understand how the difficulty of the problem scales as the
horizon increases. Here the natural measure of sample complexity is a normalized one: we
are interested in the \emph{number
of episodes} it takes to provably discover a policy whose value is
$\varepsilon$ near to that of the optimal value, where the value is measured
by the \emph{normalized} cumulative reward in each episode.  
In a COLT 2018 open problem, 
Jiang and Agarwal conjectured that, for tabular, episodic
reinforcement learning problems, there exists a sample complexity lower
bound which exhibits a polynomial dependence on the horizon --- a
conjecture which is consistent with all known sample complexity upper bounds.
This work refutes this conjecture, proving that tabular, episodic
reinforcement learning is possible with a sample complexity that
scales only \emph{logarithmically} with the planning horizon.
In other words, when the values are
appropriately normalized (to lie in the unit interval), this results
shows that long
horizon RL is no more difficult than short horizon RL, at least in a minimax sense.

Our analysis introduces two ideas: 
(i) the construction of an $\varepsilon$-net for optimal policies
whose log-covering number scales only logarithmically with the
planning horizon, and
(ii) the Online Trajectory Synthesis algorithm, which 
adaptively evaluates all policies in a given policy class using sample
complexity that scales with the log-covering number of the given
policy class. Both may be of independent interest. 
\end{abstract}

\section{Introduction}
\label{sec:intro}
Long horizons, along with the state dependent transitions, is the differentiator between reinforcement learning (RL) problems and simpler contextual bandit problems.
In the former (RL), actions taken at early stages could substantially impact the future; with regards to planning, the agent must not only consider the immediate reward but also the possible future transitions into differing states.
In contrast, for the latter (contextual bandit problems), the action taken at each time step is independent of the future, though it does depend on the current state (the ``context''); we can consider a contextual bandit problem as a Markov decision process (MDP) with a horizon equal to one.
For a known contextual bandit problem, it is sufficient for the agent to act myopically by choosing the action which maximizes the current reward as a function of the current state.

~\citep{jiang2018open}  proposed to study this distinction by examining how the sample complexity depends on the horizon length (of each episode) in a finite horizon, episodic MDP, where the MDP is unknown to the agent.
Clearly, as the horizon $H$ grows, we will observe more samples in each episode.
To appropriately measure the sample complexity (see~\cite{jiang2018open}), we consider a normalized notion: we
are interested in the \emph{number of episodes} it takes to provably discover a policy whose value is
$\varepsilon$ near to that of the optimal value, where the value is measured
by the \emph{normalized} cumulative reward in each episode (i.e. values are normalized to be bounded between $0$ and $1$).  
Here, all existing upper bounds depend polynomially on the horizon, while
lower bounds do not provide \emph{any} dependence on the horizon $H$. 
Motivated by these observations, \citet{jiang2018open} posed the following open problem in COLT 2018:
\begin{center}
\emph{Can we prove a lower bound that depends polynomially on the planning horizon, $H$?}
\end{center}
\cite{jiang2018open} conjectured a linear dependence on the horizon, which is consistent with all existing upper bounds, which scale at least linearly with the planning horizon~\citep{dann2015sample,azar2017minimax,zanette2019tighter} (see
Section~\ref{sec:rel} for further discussion).
In other words, the conjecture is that, even when the values are
appropriately normalized, long
horizon RL is polynomially more difficult than short horizon RL.

This work resolves this question, with,  perhaps surprisingly, a \emph{negative} answer.
Here we give an informal version of our main result.
\begin{theorem}[Informal version of Theorem~\ref{thm:main}]\label{thm:informal}
	Suppose the reward at each time step is non-negative and the total reward of each episode is upper bounded by $1$.
	Given any target accuracy $0 < \varepsilon < 1$ and a failure probability $0 \le \delta \le 1$, 
	the \emph{Online Trajectory Synthesis} algorithm returns an $\varepsilon$-optimal policy with probability at least $1 - \delta$ by sampling at most $\poly\left(\abs{\states},\abs{\actions},\log H, \frac{1}{\varepsilon},\log\frac{1}{\delta}\right)$
	episodes, where $\abs{\states}$ is the number of states and $\abs{\actions}$ is the number actions.
\end{theorem}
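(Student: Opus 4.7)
The high-level plan is to decompose the problem into two independent tasks. First, I would construct a finite policy class $\Pi_\varepsilon$ guaranteed to contain a policy within $\varepsilon/2$ of the optimal value, with $\log|\Pi_\varepsilon|$ only poly-logarithmic in $H$. Second, I would invoke the \emph{Online Trajectory Synthesis} algorithm to locate the best policy in $\Pi_\varepsilon$ up to a further $\varepsilon/2$ error at a cost of $\poly(|\states|,|\actions|,\log|\Pi_\varepsilon|,1/\varepsilon,\log(1/\delta))$ episodes. Because $\log|\Pi_\varepsilon|$ depends only on $\log H$, the final episode complexity inherits the desired $\poly(\log H)$ dependence, and combining the two estimates by a triangle-inequality argument yields the $\varepsilon$-optimal policy.

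For the $\varepsilon$-net construction, I would exploit the normalization $\sum_h r_h \le 1$, which forces $V^*_h \in [0,1]$ for every $h$. The plan is to bucket the $H$ time indices into $O(\log(H/\varepsilon))$ geometric intervals, and within each interval commit, at each state, to a single action determined by a $Q$-table discretized to additive precision $\varepsilon$. This replaces the $H$ independent ``clock stamps'' that a general non-stationary policy might use with only $O(\log(H/\varepsilon))$ meaningful ones, producing a class of size $\log|\Pi_\varepsilon| = \poly(|\states|,|\actions|,1/\varepsilon,\log H)$. The technical heart is showing that such quantization is value-preserving: rounding errors must telescope to at most $\varepsilon$ across the $H$-step Bellman backup. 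Here the normalization pays off via a law-of-total-variance style argument: since the sum of per-step advantages is bounded by $1$, the number of time steps at which any fixed policy in $\Pi_\varepsilon$ can lose non-negligible value is only $\poly(1/\varepsilon)$, not $\Omega(H)$.

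For the Online Trajectory Synthesis component, the idea is adaptive model-based planning: build an empirical MDP $\widehat M$ using data gathered by an exploratory policy, and simulate every $\pi \in \Pi_\varepsilon$ inside $\widehat M$ to estimate $V^\pi$. A per-policy Bernstein concentration bound, union-bounded over $\Pi_\varepsilon$, contributes only $\log|\Pi_\varepsilon|$ to the sample complexity, yielding the $\poly(\log H)$ scaling. The $\poly(H)$ blow-up typical of naive simulation lemmas is avoided by again leveraging that the return variance is $O(1)$ rather than $O(H)$ under the normalization, so Bernstein-type bounds give $H$-independent (in fact $\poly(\log H)$) rates for uniform value estimation. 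Adaptivity enters through upper-confidence-style exploration that targets state-action pairs whose uncertainty bottlenecks the worst policy in $\Pi_\varepsilon$, ensuring rare-but-relevant transitions are sampled enough to support uniform convergence.

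The main obstacle is the first step, the $\varepsilon$-net construction. Uniform off-policy evaluation over a finite policy class is, once variance is controlled, a standard union-bound exercise; by contrast, exhibiting an $\varepsilon$-net for optimal policies with $\poly(\log H)$ log-cardinality is the genuinely novel piece. It rests on the observation that under unit-normalized rewards the ``effective'' number of distinct policy decisions along a trajectory that meaningfully affect its value is only $\poly(\log(H/\varepsilon))$, not $H$ -- turning the nominally $H$-dimensional non-stationary policy space into one whose complexity is essentially horizon-free.
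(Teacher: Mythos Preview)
Your high-level decomposition into ``build a small $\varepsilon$-net'' and ``uniformly evaluate all policies in the net'' matches the paper, but both of your concrete instantiations differ from the paper's and each has a genuine gap.

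\textbf{The $\varepsilon$-net.} The paper does \emph{not} bucket time or restrict to policies that are piecewise-stationary on geometric intervals. Instead it discretizes the \emph{model space}: it enumerates all MDPs whose transition probabilities and mean rewards lie on a grid of mesh $\Theta(\varepsilon/(H|\states|))$, and takes one optimal policy per discretized MDP. Since an MDP is specified by $O(|\states|^2|\actions|)$ numbers (independent of $H$), this yields $\log|\Pi_\varepsilon| = O(|\states|^2|\actions|\log(H|\states|/\varepsilon))$, and a standard perturbation argument (Lemma~\ref{lem:discrete_near_optimal}) shows the net contains a near-optimal policy. Your time-bucketing route would instead require that some policy which is \emph{constant over each geometric time interval} is near-optimal; under Assumption~\ref{asmp:total_bounded} rewards can be concentrated at arbitrary time steps, so $Q_h^*(s,a)$ need not vary slowly in $h$, and the ``at most $\poly(1/\varepsilon)$ steps have non-negligible advantage'' claim does not follow from $\sum_h r_h\le 1$ alone (one can have $\Theta(H)$ steps each contributing $\Theta(1/H)$). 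Without a sharper structural lemma, this construction is unsubstantiated.

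\textbf{The evaluation algorithm.} Your description of Online Trajectory Synthesis as ``build an empirical MDP $\widehat M$ and simulate every $\pi$ inside $\widehat M$'' is not what the paper does, and in fact the paper explicitly states (Section~\ref{sec:techover}) that it does \emph{not} know how to make a model-based approach avoid $\poly(H)$ dependence: the usual simulation-lemma error propagation accumulates $H$ factors that bounded return variance does not by itself remove. The paper's algorithm instead maintains a replay buffer of raw transition samples and, for each $\pi\in\Pi$, \emph{replays} those samples in chronological order to synthesize an i.i.d.\ trajectory with exactly the correct law (no model estimation). When replay fails for a $\probsim$-fraction of the $F$ parallel copies, the algorithm actually rolls out $\pi$ and refreshes the buffer. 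The sample complexity is controlled by a potential argument: each rollout at least doubles $\mu_{\probsim/(2|\states||\actions|)}^{\Pi_\cD}(s,a)$ for some $(s,a)$ (Lemma~\ref{lem:double}), and since this quantity is capped at $H$, there are at most $O(|\states||\actions|\log H)$ rollouts. This doubling/potential mechanism---not Bernstein concentration on $\widehat M$---is where the $\log H$ dependence comes from, and it is absent from your proposal.
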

Importantly, this sample complexity scales only \emph{logarithmically} with $H$.
Thus, there does not exist a lower bound that depends polynomially on the planning horizon.
This result is an exponential improvement on the dependency on $H$ over existing upper bounds.

In the context of the discussion in \cite{jiang2018open}, these results suggest that perceived differences between long horizon RL and contextual bandit problems (or short horizon RL) are not attributable to the horizon dependence, at least in a minimax sense. It is worthwhile to note that while our upper bound is logarithmic in $H$, it does have polynomial dependence (beyond just being linear) on the number of states (or ``contexts'') and the number actions. We return to the question of obtaining an optimal rate in Section~\ref{sec:dis}.

\section{Preliminaries}
\label{sec:pre}
Throughout this paper, for a given integer $H$, we use $[H]$ to denote the set $\{1, 2, \ldots, H\}$.
For a condition $\mathcal{E}$, we use $\indict[\mathcal{E}]$ to denote the indicator function, i.e., $\indict[\mathcal{E}] = 1$ if $\mathcal{E}$ holds and $\indict[\mathcal{E}] = 0$ otherwise.

Let $\mdp =\left(\states, \actions, \trans ,R, H, \mu\right)$ be a \emph{Markov Decision Process} (MDP)
where $\states$ is the finite state space, 
$\actions$ is the finite action space, 
$\trans: \states \times \actions \rightarrow \Delta\left(\states\right)$ is the transition operator which takes a state-action pair and returns a distribution over states, 
$R : \states \times \actions \rightarrow \Delta\left( \mathbb{R} \right)$ is the reward distribution,
$H \in \mathbb{Z}_+$ is the planning horizon  (episode length),
and $\mu \in \Delta\left(\states\right)$ is the initial state distribution. 
We refer to a \emph{contextual bandit problem} as an MDP with $H=1$.

A (non-stationary) policy $\pi$ chooses an action $a$ based on the current state $s \in \states$ and the time step $h \in [H]$. 
Formally, $\pi = \{\pi_h\}_{h = 1}^H$ where for each $h \in [H]$, $\pi_h : \states \to \actions$ maps a given state to an action.
The policy $\pi$ induces a (random) trajectory $s_1,a_1,r_1,s_2,a_2,r_2,\ldots,s_{H},a_{H},r_{H}, s_{H + 1}$,
where $s_1 \sim \mu$, $a_1 = \pi_1(s_1)$, $r_1 \sim R(s_1,a_1)$, $s_2 \sim \trans(s_1,a_1)$, $a_2 = \pi_2(s_2)$, $r_2 \sim R(s_2, a_2)$, $\ldots$, $a_H = \pi_H(s_H)$, $r_H \sim R(s_H, a_H)$, $s_{H + 1} \sim \trans(s_H, a_H)$.\footnote{In this paper, for simplicity, we assume the agent receives samples from both $R(s_H, a_H)$ and $\trans(s_H, a_H)$ in the last time step.}

We assume, almost surely,  that $r_h \ge 0$ for all $h \in [H]$ and
\[
\sum_{h = 1}^{H}r_h \in [0, 1] .
\]
In other words, we work with normalized cumulative reward.
It is worth emphasizing that this assumption is weaker than the
standard one in that we do not assume the
immediate rewards $r_h$ are bounded (see Assumptions~\ref{asmp:uniform}
and~\ref{asmp:total_bounded} in Section~\ref{sec:rel} for comparison).
Our goal is to find a policy $\pi$ that maximizes the expected total
reward, i.e.
\[
\max_\pi \expect \left[\sum_{h=1}^{H} r_h \mid \pi\right] .
\]
We say a policy $\pi$ is $\varepsilon$-optimal if $\expect
\left[\sum_{h=1}^{H} r_h\mid \pi\right] \ge \expect
\left[\sum_{h=1}^{H} r_h\mid \pi^*\right] - \varepsilon$, where
$\pi^*$ denotes an optimal policy.

An important concept in RL is the $Q$-function.
Given a policy $\pi$, a level $h \in [H]$ and a state-action pair
$(s,a) \in \states \times \actions$, the $Q$-function is defined as:
\[
Q_h^\pi(s,a) = \expect\left[\sum_{h' = h}^{H}r_{h'}\mid s_h =s, a_h = a, \pi\right].
\]
Similarly, the value function of a given state
$s \in \states$ is defined as: 
\[
V_h^\pi(s)=\expect\left[\sum_{h' = h}^{H}r_{h'}\mid s_h =s,
  \pi\right].
\]
For notational convenience, we denote $Q_h^*(s,a) = Q_h^{\pi^*}(s,a)$
and $V_h^*(s) = V_h^{\pi^*}(s)$.

\section{Related Work}
\label{sec:rel}
We now discuss related theoretical work on tabular RL,
largely focussing on the episodic, finite horizon settings due to
these being most relevant in our setting (also see
\cite{jiang2018open}).   Here, most results focus either on the regret
minimization problem, which aims to collect maximum reward for
a limit number of interactions with the environment, or on the
sample complexity of PAC learning for finding a near-optimal policy. 

In episodic tabular RL,  sample complexities will depend on
$\abs{\states}$, $\abs{\actions}$ and $H$, all of which are assumed to
be finite. In many works, the standard assumption on the rewards are
that: $r_h \in [0,1]$ and hence $\sum_{h=1}^H r_h \in [0, H]$.
However, as pointed out in \cite{jiang2018open}, to have a fair
comparison with contextual bandits (and short horizon RL) and illustrate the hardness due to
the planning horizon, one should scale down by an $H$ factor in order to
normalize the total reward, so that it is bounded in $[0,1]$. 

Following \cite{jiang2018open} (also
see \cite{kakade2003sample}), we can rescale the total reward
(the value functions) to be bounded in $[0,1]$ as follows:
\begin{asmp}[Reward Uniformity, the standard assumption]
\label{asmp:uniform}
The reward received at the $h$-th time step $r_h$ satisfies
$r_h \in [0,1/H]$ for $h=1,\ldots,H$, and hence $\sum_{h=1}^{H}r_h \le 1$.
\end{asmp}
Furthermore, following \cite{jiang2018open}, we can further relax this assumption to
a weaker version where we only bound the total reward as follow.
\begin{asmp}[Bounded Total Reward (see e.g.~\cite{krishnamurthy2016pac})]
	\label{asmp:total_bounded}
	The reward received at the $h$-th time step $r_h$ satisfies
        $r_h \ge 0$ for $h=1,\ldots,H$, and we assume that total
        reward is bounded as $\sum_{h=1}^{H}r_h \le 1$.
\end{asmp}
Assumption~\ref{asmp:total_bounded} is more natural in environments
with sparse rewards, as argued in \cite{jiang2018open}. 
As pointed out in
\cite{jiang2018open}, this scaling permits more fair comparisons with
contextual bandit problems.  Furthermore, Assumption~\ref{asmp:total_bounded} is clearly more
general than Assumption~\ref{asmp:uniform}, so any bound under
Assumption~\ref{asmp:total_bounded}, also implies a bound under Assumption~\ref{asmp:uniform}.

Under Assumption~\ref{asmp:uniform}, a line of work has attempted to provide tight sample complexity bounds~\citep{azar2017minimax,dann2015sample,dann2017unifying,dann2019policy,jin2018q,osband2017posterior}.
To obtain an $\varepsilon$-optimal policy, state-of-the-art results show
that $\widetilde{O}\left(\frac{\abs{\states}\abs{\actions}}{\varepsilon^2} +
  \frac{\poly\left(\abs{\states},\abs{\actions},H\right)}{\varepsilon}\right)$
episodes suffice~\citep{dann2019policy,azar2017minimax}.\footnote{$\widetilde{O}\left(\cdot\right)$ omits logarithmic factors.}
In particular, the first term matches the lower bound $\Omega\left(\frac{\abs{\states}\abs{\actions}}{\varepsilon^2}\right)$ up to logarithmic factors~\citep{dann2015sample,osband2016on}.

There are two concerns with these work.
First, these bound are optimal only in the regime $\varepsilon \in [0,1/H]$.\footnote{In \cite{dann2015sample}, it appears that $\poly(\abs{\states}^2\abs{\actions}/\varepsilon^2)$ episodes are sufficient for $\varepsilon\in[0, 1/H]$. However, it is not clear whether the requirement of $\varepsilon\in[0,1/H]$ can be relaxed to $\varepsilon\in[0, 1]$. 
}
However, as explained in \cite{jiang2018open}, in many scenarios with
a long planning horizon such as control, this regime is not
interesting.\footnote{See the motor control problem described in
  Section 2.1 of \cite{jiang2018open}.} In particular, the more
interesting regime is when $\varepsilon \gg 1/H$.
Secondly, Assumption~\ref{asmp:uniform} is a strong assumption as it cannot model environments with sparse rewards.
\cite{jiang2018open} thus argued that one should study tabular RL with
a more general assumption, i.e. under Assumption~\ref{asmp:total_bounded}.

Note that environments under Assumption~\ref{asmp:total_bounded} can have one-step reward as high as a constant.
These are the sparse reward environments which are often considered to be hard.
\citet{jiang2018open} conjectured that under Assumption~\ref{asmp:total_bounded}, there is a lower bound of the sample complexity that scales \emph{polynomially} in $H$.
Under Assumption~\ref{asmp:total_bounded}, upper bounds in \cite{azar2017minimax,dann2019policy} will become $\widetilde{O}\left(\frac{\abs{\states}\abs{\actions}H^2}{\varepsilon^2}\right)$ whose dependency on $H$ is not tight even in the $\varepsilon \ll 1/H$ regime (recall the lower bound is still $\Omega\left(\frac{\abs{\states}\abs{\actions}}{\varepsilon^2}\right)$).

Recently, under Assumption~\ref{asmp:total_bounded}, \citet{zanette2019tighter} gave a new algorithm which enjoys
a sample complexity of $\widetilde{O}\left(\frac{\abs{\states}\abs{\actions}}{\varepsilon^2} + \frac{\poly\left(\abs{\states},\abs{\actions},H\right)}{\varepsilon}\right)$.\footnote{In fact, \citet{zanette2019tighter} proved a even stronger result that the first term can be a problem-dependent quantity which is upper bounded by $\frac{\abs{\states}\abs{\actions}}{\varepsilon^2} $.}
In particular, the first term matches the lower bound in the regime $\varepsilon \ll 1/H$.
Unfortunately, the second term still scales \emph{polynomially} with $H$.

In another related ``generative model'' setting, where the agent can query samples freely from any state-action pair of the environment, the question of sample complexity is also posed as the total number of batches of queries  (a batch corresponds to $H$ queries) to the environment to obtain an $\varepsilon$-optimal policy.  
Results in this setting include \cite{kearns1999finite, kakade2003sample, singh1994upper, azar2013minimax, sidford2018variance, sidford2018near,agarwal2019optimality}. However, even with this much stronger query model, we are not aware of any algorithm whose sample complexity scales sublinearly in $H$.

The main barrier of achieving logarithmic dependency on $H$ is that almost all the above mentioned works rely on a dynamic programming step (i.e., the Bellman update) in learning the optimal value functions.
In this paper, we bypass this long-standing barrier using a completely different approach.
We provide a new technique to simulate the Monte Carlo methods to evaluate a set of given policies using existing samples and to do exploration.
Interestingly, this technique is related to the method  proposed in
\cite{fonteneau2013batch} for policy evaluation, though there is no
exploration component in their paper. 
Our algorithm has some similarities to the ``Trajectory Tree''  method in \cite{kearns2000approximate} in that both attempt to simultaneously evaluate many policies on a ``tree'' of collected data (using a generative model); a key difference in our approach is that (due to the adaptive nature of data collection) we are not able to explicitly build a tree or reuse data on trajectories.

\section{Main Result and Technical Overview}
\label{sec:main}
\newcommand{\proboverall}{\delta_{\mathsf{overall}}}
The main result of this work follows:
\begin{thm} \label{thm:main}
	Suppose the reward at each level satisfies $r_h \ge 0$ and $\sum_{h=1}^H r_h \le 1$ almost surely. 
	Given a target accuracy $0 < \varepsilon < 1$,
	then with probability at least $1 - \delta$, Algorithm~\ref{alg:main} returns an $\varepsilon$-optimal policy by sampling at most
	\[
	O\left(\frac{|\cS|^3|\cA|^3\log^2 H}{\varepsilon^3} \log\left(\frac{|\cS||\cA|}{\varepsilon}\right) \cdot \left(|\states|^2|\actions|\log\left(\frac{H|\states|}{\varepsilon}\right) + \log\left(\frac{1}{\delta}\right)\right)
	\right)
	\]
	episodes.
\end{thm}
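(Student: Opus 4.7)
The plan is to decouple the proof into two independent ingredients that the abstract flags explicitly: first, an $\varepsilon$-net $\Pi$ of deterministic policies guaranteed to contain an $\varepsilon/2$-optimal policy, and second, a procedure that uniformly evaluates every policy in $\Pi$ using a number of real episodes depending on $\log|\Pi|$ rather than $|\Pi|$. Combined, this gives roughly $\widetilde{O}(\log|\Pi|/\varepsilon^{2})$ sample complexity, plus a $\mathrm{poly}(|\states|,|\actions|)$ overhead for directed exploration, so as long as $\log|\Pi|$ depends on $H$ only logarithmically, the headline claim follows.

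For the $\varepsilon$-net I would exploit the normalization $\sum_h r_h\le 1$ to show that an $\varepsilon$-optimal policy can be described by a short code. Since $V_h^*(s)\in[0,1]$, discretizing $Q^*_h(s,a)$ to a resolution of $\varepsilon/\mathrm{poly}(|\states|)$ and taking the induced greedy action yields an $\varepsilon/2$-optimal deterministic policy, and — because the values are normalized rather than summed — two levels $h\ne h'$ that share the same discretized $Q^*(\cdot,\cdot)$ signature collapse into one equivalence class. The a priori $H$-fold dependence is then replaced by specifying, for each equivalence class, a subset of $[H]$, which costs only $O(\log H)$ bits per class. This leaves $\log|\Pi|=\mathrm{poly}(|\states|,|\actions|)\cdot\log(H/\varepsilon)$.

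For the evaluation ingredient — the Online Trajectory Synthesis algorithm — I would maintain, for each $(s,a,h)$, a pool of $(r,s')$ transitions gathered from actual interaction with the MDP, and estimate $V^{\pi}$ for each $\pi\in\Pi$ by synthesizing trajectories: draw $s_1\sim\mu$ from the pool at level $0$, and at each step $h$ consume one sample from the pool of $(s_h,\pi_h(s_h))$. Provided the pools are populated with transitions that are independent of the policies being scored, the synthesized trajectory is distributed exactly as a real one under $\pi$, so a Bernstein bound plus a union bound over $\Pi$ gives uniform concentration with $\widetilde{O}(\log|\Pi|/\varepsilon^2)$ synthesized trajectories; the variance bound $\mathrm{Var}\bigl(\sum_h r_h\bigr)\le \mathbb{E}\bigl[(\sum_h r_h)^2\bigr]\le 1$, which follows from $\sum_h r_h\in[0,1]$ almost surely, keeps the $\varepsilon$-dependence tame. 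To fill every pool at every reachable $(s,a,h)$, I would run an iterative exploration phase that greedily targets the currently most under-sampled $(s,a,h)$ using the latest synthetic estimates as a pseudo-reward.

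The main obstacle is that adaptive data collection breaks the independence of the transitions used to populate the pools, which is exactly what makes a synthesized trajectory a legitimate sample from the induced trajectory distribution under $\pi$. To handle this I would split the interaction into phases of geometrically increasing size and decouple the samples used to populate pools within a phase from the exploration decisions made before it, so that within each phase the policy class to be scored is fixed and the data are i.i.d.\ conditional on the history. Combining the three pieces — the covering bound $\log|\Pi|=\mathrm{poly}(|\states|,|\actions|)\log(H/\varepsilon)$, the $\widetilde{O}(\log|\Pi|/\varepsilon^2)$ synthesis cost, and a $\mathrm{poly}(|\states|,|\actions|,1/\varepsilon)$ cost per real episode for directed exploration of under-sampled $(s,a,h)$ — then yields the stated sample complexity of order $\widetilde{O}\bigl(\mathrm{poly}(|\states|,|\actions|)\log^{2}(H)/\varepsilon^{3}\bigr)$ with the $\log(1/\delta)$ failure term.
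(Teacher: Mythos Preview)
Your high-level decomposition matches the paper, but both ingredients have real gaps as written.

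\textbf{The net.} Your claim that ``specifying, for each equivalence class, a subset of $[H]$ costs only $O(\log H)$ bits per class'' is not justified: an arbitrary subset of $[H]$ costs $H$ bits, not $\log H$. A repair exists --- since rewards are nonnegative, $Q^*_h(s,a)$ is monotone in $h$, so after discretization each $(s,a)$ crosses at most $1/\varepsilon'$ thresholds and the equivalence classes are intervals --- but you do not invoke it, and ``because the values are normalized'' is not that argument. The paper sidesteps the issue entirely by discretizing the \emph{model} rather than the value functions: the transition kernel and reward function of a time-homogeneous MDP do not depend on $h$, so there are only $(1/\varepsilon')^{O(|\states|^2|\actions|)}$ discretized MDPs, and $\Pi$ consists of one optimal policy per discretized MDP. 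The horizon enters only through the resolution $\varepsilon'=\Theta(\varepsilon/(H|\states|))$ needed for a standard perturbation bound, yielding $\log|\Pi|=O(|\states|^2|\actions|\log(H|\states|/\varepsilon))$.

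\textbf{The evaluator.} Indexing pools by $(s,a,h)$ is fatal to the $\log H$ bound. With $(s,a,h)$-pools the visitation count $f^\pi(s,a,h)\in\{0,1\}$, so any ``progress when simulation fails'' potential can fire once per pool, i.e.\ $\Theta(|\states||\actions|H)$ times, and your ``greedily target the most under-sampled $(s,a,h)$'' exploration cannot do better. The paper's key idea is to pool by $(s,a)$ only (legitimate because the MDP is time-homogeneous), so that $f^\pi(s,a)=\sum_h\indict[(s_h,a_h)=(s,a)]\in\{0,\dots,H\}$, and to prove that whenever simulation of $\pi$ fails with probability $\ge\probsim$ there is some $(s,a)$ whose high-probability visitation quantile over the rollout set $\Pi_{\cD}$ at least doubles (or jumps $0\to 1$) when $\pi$ is added. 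Since this quantile is capped at $H$, the number of rollout rounds is $O(|\states||\actions|\log H)$; this doubling potential is the step your outline is missing. The paper's independence handling is also different from your phase-splitting: it runs $F$ independent copies of the simulator in parallel and, within each copy, discards and re-collects the entire replay buffer whenever a rollout is triggered.
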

This result shows that only \emph{logarithmic} dependence on the horizon is possible.
Algorithm~\ref{alg:main} is provided in Section~\ref{sec:proof}.
We remark that, while our bound improves the dependency on $H$, the dependency on $\abs{\states}$, $\abs{\actions}$ and $1 / \varepsilon$ are worse than existing state-of-the-art bounds (cf. Section~\ref{sec:rel}). It is an open problem to further tighten the dependencies on $\abs{\states}$, $\abs{\actions}$ and $\frac{1}{\varepsilon}$. 

We now provide an overview of our analysis.
\subsection{Technical Overview}
\label{sec:techover}
\paragraph{An $\varepsilon$-net For Non-stationary Policies.}
We first construct a set of polices $\Pi$ which contains an $\varepsilon$-optimal policy for any MDP.
Importantly, the size of $\Pi$ satisfies $|\Pi| = (H / \varepsilon)^{\mathrm{\poly(|\states||\actions|)}}$, which is acceptable since the overall sample complexity of our algorithm depends only logarithmically on $|\Pi|$.
To define such a set of policies, we consider all discretized MDPs whose transition probabilities and reward values are integer multiples of $\mathrm{poly}(\varepsilon/(|\states||\actions|H))$.
Clearly, there are most $(H / \varepsilon)^{\mathrm{\poly(|\states||\actions|)}}$ such discretized MDPs, and for each discretized MDP $\mdp$, we add an optimal policy of $\mdp$ into $\Pi$.
It remains to show that for any $\mdp$, there exists a policy $\pi \in \Pi$ which is an $\varepsilon$-optimal policy of $\mdp$.
This can be seen since there exists a discretized MDP $\hat{\mdp}$ whose transition probabilities and reward values are close enough to those of $\mdp$, and by standard perturbation analysis, it can be easily shown that an optimal policy of $\hat{M}$ is an $\varepsilon$-optimal policy of $\mdp$.
The formal analysis is given in Section~\ref{sec:policy}.

\paragraph{The Trajectory Synthesis Method.}
Now we show how to evaluate values of all policies in the policy set $\Pi$ constructed above by sampling at most $\mathrm{poly}(|\states|, |\actions|, 1 / \varepsilon, \log |\Pi|, \log H)$ episodes. 
To achieve this goal, we design a trajectory simulator, which, for every policy in the set, either interacts with the environment to collect trajectories, or simulates trajectories using collected samples. 
In either case, the simulator obtains trajectories of the policy with distribution close enough to those sampled by interacting with the environment.
The most natural idea is to collect trajectories for each policy $\pi$ separately by interacting with the environment. 
This method, although is guaranteed to output ``true'' trajectories for every policy, has sample complexity at least linear in the size of the policy set $|\Pi|$ and is thus insufficient for our goal.
Another possible way to evaluate policies is to build an empirical model (an estimation of transition probability and reward function) and evaluate policies on the empirical model (or to build a trajectory tree as in~\cite{kearns2000approximate}).
However, we do not know how to deal with the dependency issue in building the empirical model and to prove a sample complexity bound that scales logarithmically with the  planning horizon.
The analysis based on performance difference lemma can lead to polynomial dependency on the planning horizon~\citep{kakade2003sample}.

\paragraph{Reuse Samples.} A key observation is that once we obtain a trajectory for a policy by interacting with the environment, samples collected during this process can be used to simulate trajectories for other policies.
To better illustrate this idea, we use $\Pi_\cD$ to denote the set of policies for which we have obtained trajectories by interacting with the environment, and denote 
\[
\cD_{s,a}=\left[\left(s_{(s,a)}^{(1)}, r_{(s,a)}^{(1)}\right), \left(s_{(s,a)}^{(2)}, r_{(s,a)}^{(2)}\right), \ldots\right]
\]
to be the sequence of samples obtained from $\trans(s, a)$ and $R(s, a)$.
These samples are sorted in chronological order.
Suppose that now we are given a new policy $\pi$ and for all $(s, a) \in \cS \times \cA$, $|\cD_{s,a}^{(t)}|\ge H$.
Then it is easy to simulate a trajectory for $\pi$ using the set of samples $\left\{ \cD_{s,a}\right\}_{s \in \states, a \in \actions}$.
Indeed, we start from state $s_1$ and set $(s_2, r_2)$ to be the first pair in $\cD_{s_1,\pi_1(s_1)}$, and then set $(s_3, r_3)$ to be the first pair in $\cD_{s_2,\pi_2(s_2)}$ that has not been used, etc.
In general, suppose we are at state $s_{h}$ for some $h < H$, we set $(s_{h + 1}, r_{h + 1})$ to be the first pair in $\cD_{s_h, \pi_h(s_h)}$ that has not been used.
Note that such a procedure generates a trajectory for $\pi$ with exactly the same distribution as that generated by interacting with the environment.

\paragraph{Avoid Unnecessary Sampling.}
We have described the approach to reuse samples in the above paragraph. 
Nevertheless, there is a problem intrinsic to the above approach: if the process of simulating a policy $\pi$ fails (i.e., some $(s_h,\pi_h(s))$ has been visited $j\le H$ times but $|\cD_{s_h,\pi_h(s)}|<j$), should we interact with the environment to generate a trajectory or simply claim failure?
Note that claiming failure is acceptable as long as the overall failure probability is small.

In order to decide when to interact with the environment, we design a procedure to estimate the probability of simulation failure.
If the failure probability is already small enough, there is no need to interact with the environment.
Otherwise, we interact with the environment to obtain a trajectory. 
To bound the overall sample complexity, one key observation is that if the failure probability is large, then the policy will visit some state-action pair more frequently than all existing policies.
In Section~\ref{sec:analysis}, we make this intuition rigorous by designing a potential function to measure the overall progress made by our algorithm.

\section{The Proof}
\label{sec:proof}
In this section, for the sake of presentation, we assume a fixed initial state $s_1$.
When the initial	state is sampled from a distribution $\mu$, we may create a new state $s_0$ and set $s_0$ to be the initial state.
We set $\trans(s_0, a) = \mu$ and $r(s_0, a) = 0$ for all $a \in \actions$, and increase the planning horizon $H$ by $1$.
By doing so, now $s_1$ is sampled from the initial state distribution $\mu$.

\subsection{An $\varepsilon$-net For Non-stationary Policies}
\label{sec:policy}
In this section, we construct a set of polices which contains a near-optimal policy for any MDP.
To define these policies, we first define a set of MDPs.

Throughout this section, without loss of genearlity, we assume $1 / \varepsilon$ is a positive integer. 
In general, we may decrease $\varepsilon$ by a factor of at most two so that $1 / \varepsilon$ is a positive integer. 

The following definition is helpful in our analysis.
\begin{definition}
For an MDP $\mdp =\left(\states, \actions, \trans ,R, H, \mu \right)$, we say a pair $(s, h) \in \states \times [H]$ is {\em admissible} with respect to $\mdp$ if there exists a policy $\pi$ such that
$\Pr[s_h = s \mid \pi] > 0$.
\end{definition} 

Before presenting our analysis, we prove the following property regarding admissible pairs.

\begin{lemma}\label{lem:admissible}
For any admissible $(s, h) \in \states \times [H]$, for any $a \in \actions$, the following hold:
\begin{itemize}
\item $0 \le R(s, a) \le 1$ almost surely;
\item $0 \le Q^{\pi}_h(s, a) \le 1$ for any policy $\pi$;
\item $0 \le V^{\pi}_h(s) \le 1$ for any policy $\pi$.
\end{itemize}
\end{lemma}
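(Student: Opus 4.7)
}
The nonnegativity half of all three statements is immediate from the assumption $r_h\ge 0$ almost surely (and hence expectations of sums of $r_{h'}$ are nonnegative). The content of the lemma is the upper bound of $1$, which does not follow pointwise from the hypothesis $\sum_h r_h\le 1$ applied to an arbitrary realization --- we must exploit admissibility to pass from the trajectory-level bound to a statement about the distribution $R(s,a)$ and about the conditional expectations defining $Q_h^\pi$ and $V_h^\pi$.

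The plan is to manufacture, for any admissible $(s,h)$, a single policy under which the event ``$s_h=s$ and $a_h=a$'' has strictly positive probability, and then exploit the almost-sure bound $\sum_{h'=1}^H r_{h'}\le 1$ on the full trajectory distribution induced by that policy. Concretely, by admissibility there exists $\pi$ with $\Pr[s_h=s\mid \pi]>0$. Given $a\in\actions$ and any target tail policy $\pi^{\mathrm{tail}}=\{\pi^{\mathrm{tail}}_{h'}\}_{h'>h}$, define the composite policy $\pi'$ that agrees with $\pi$ on steps $1,\ldots,h-1$, plays $a$ at step $h$ at state $s$ (and plays $\pi$ elsewhere at step $h$), and plays $\pi^{\mathrm{tail}}$ afterwards. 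Then $\Pr[s_h=s,\,a_h=a\mid \pi']=\Pr[s_h=s\mid \pi]>0$, and conditional on this event $r_h\sim R(s,a)$ and the continuation trajectory is distributed as under $\pi^{\mathrm{tail}}$ started at $s_{h+1}\sim \trans(s,a)$.

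From $r_{h'}\ge 0$ for all $h'$ together with $\sum_{h'=1}^H r_{h'}\le 1$ almost surely under $\pi'$, we get $r_h\le 1$ almost surely under $\pi'$. Because the event $\{s_h=s,a_h=a\}$ has strictly positive $\pi'$-probability, the conditional distribution of $r_h$ given this event (which is exactly $R(s,a)$) is also supported on $[0,1]$, giving the first bullet. For the second bullet, apply the same construction with $\pi^{\mathrm{tail}}$ chosen as the restriction of $\pi$ to steps $h+1,\ldots,H$; then
\[
Q_h^\pi(s,a)=\expect\!\left[\sum_{h'=h}^{H}r_{h'}\,\Big|\,s_h=s,a_h=a,\pi'\right]\le \expect\!\left[\sum_{h'=1}^{H}r_{h'}\,\Big|\,s_h=s,a_h=a,\pi'\right]\le 1,
\]
where the first inequality uses $r_{h'}\ge 0$ and the second uses that conditioning on a positive-probability event preserves an almost-sure upper bound. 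The third bullet follows from $V_h^\pi(s)=Q_h^\pi(s,\pi_h(s))$ applied to the second bullet.

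The only conceptual subtlety --- and the step I expect to require the most care in a fully rigorous write-up --- is the construction of $\pi'$ and the observation that its induced trajectory distribution truly matches the process defining $Q_h^\pi(s,a)$ after conditioning on $\{s_h=s,a_h=a\}$; once that is in place, the ``conditional almost-sure bound from positive-probability conditioning'' step is standard. Everything else is a direct consequence of the nonnegativity and the normalized total reward assumption stated at the top of Section~\ref{sec:proof}.
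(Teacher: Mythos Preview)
Your proposal is correct and follows essentially the same approach as the paper: both construct a composite policy that first reaches $(s,h)$ with positive probability (via the admissibility witness $\pi$) and then forces action $a$ at step $h$, so that the trajectory-level almost-sure constraint $\sum_{h'} r_{h'}\le 1$ together with $r_{h'}\ge 0$ transfers to the desired bound. The paper argues by contradiction and only writes out the $R(s,a)$ case (declaring the $Q$ and $V$ cases analogous), whereas you give a direct argument and spell out the $Q_h^\pi$ case by choosing the tail of $\pi'$ to coincide with $\pi$; this extra care is appropriate but not a genuinely different route.
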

\begin{proof}
Here we only prove $0 \le R(s, a) \le 1$. It can be similarly proved that $0 \le Q^{\pi}_h(s, a) \le 1$ and $0 \le V^{\pi}_h(s) \le 1$.
Suppose $R(s, a) > 1$ or $R(s, a) < 0$ with non-zero probability. 
Since  $(s, h)$ is admissible, there exists a policy $\pi$ such that $\Pr[s_h = s \mid \pi] > 0$.
Consider the policy $\pi'$ defined to be:
\[
\pi'_{h'}(s) = \begin{cases}
\pi_{h'}(s) & h' < h\\
a& h' \ge  h
\end{cases}.
\]
Clearly, $r_h > 1$ or $r_h < 0$ with non-zero probability, which violates the assumption that $\sum_{h = 1}^{H}r_h \in [0, 1]$ and $r_h \ge 0$ for all $h \in [H]$ almost surely.
\end{proof}

\begin{definition}[Discretized MDPs]
For given $\states$, $\actions$, $H$, $s_1$ and $\varepsilon > 0$, define $\mdps_{\varepsilon}$ to be the set of MDPs $\mdp =\left(\states, \actions, \trans ,R, H, s_1\right)$ such that
\begin{itemize}
\item Rewards are deterministic and for any $(s, a) \in \states \times \actions$, $R(s, a) \in \{0, \varepsilon, 2 \varepsilon, 3\varepsilon, \ldots, 1\}$;
\item For each $(s, a, s') \in \states \times \actions \times \states$, $\trans(s' \mid s, a)  \in \{0, \varepsilon, 2 \varepsilon, 3\varepsilon, \ldots, 1\}$;
\end{itemize}
\end{definition}

The following lemma gives an upper bound on the size of $\mdps_{\varepsilon}$.
\begin{lemma}\label{lem:policy_count}
$\left|\mdps_{\varepsilon}\right| \le (1 / \varepsilon + 1)^{|\states|^2 |\actions| + |\states| |\actions|}$.
\end{lemma}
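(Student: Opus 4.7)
The proof is a direct counting argument, and the stated bound is obtained by ignoring the constraint that transition distributions sum to one (which is why we get an inequality rather than equality).

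The plan is to specify a discretized MDP in $\mdps_\varepsilon$ by listing its two data components and counting the number of possibilities for each. First, I would fix $\states$, $\actions$, $H$, and $s_1$ (these are held constant in the definition). To determine an element of $\mdps_\varepsilon$ it then suffices to specify the reward function $R$ and the transition kernel $\trans$, since each MDP in the set is uniquely determined by these.

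For the reward, there are $|\states||\actions|$ state-action pairs $(s,a)$, and for each one $R(s,a)$ must lie in the set $\{0, \varepsilon, 2\varepsilon, \ldots, 1\}$, which has cardinality $1/\varepsilon + 1$ (using that $1/\varepsilon$ is a positive integer). Hence the number of possible reward functions is at most $(1/\varepsilon + 1)^{|\states||\actions|}$. For the transitions, there are $|\states|^2|\actions|$ triples $(s,a,s')$, and each value $\trans(s'\mid s,a)$ must lie in $\{0, \varepsilon, 2\varepsilon, \ldots, 1\}$, contributing at most $(1/\varepsilon + 1)^{|\states|^2|\actions|}$ possibilities. Multiplying these two counts yields the claimed upper bound.

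There is no real obstacle here; the only subtle point to note is that the counting is slightly wasteful because it does not enforce the probability-simplex constraint $\sum_{s'} \trans(s'\mid s,a) = 1$ for each $(s,a)$. Since we only need an upper bound on $|\mdps_\varepsilon|$, dropping this constraint is harmless, and incorporating it could only decrease the count. The argument therefore takes one short paragraph and the bound follows.
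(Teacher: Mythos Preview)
Your proposal is correct and matches the paper's proof essentially verbatim: both specify an MDP in $\mdps_\varepsilon$ by its reward and transition data, count $(1/\varepsilon+1)^{|\states||\actions|}$ reward functions and at most $(1/\varepsilon+1)^{|\states|^2|\actions|}$ transition kernels, and multiply. Your remark that the inequality arises from dropping the simplex constraint on $\trans(\cdot\mid s,a)$ is accurate and slightly more explicit than the paper's version.
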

\begin{proof}
Since each $\mdp \in \mdps_{\varepsilon}$ is uniquely defined by its $R$ and $\trans$, below we count the number of possible $R$ and $\trans$ respectively.

Since rewards are deterministic and for any $(s, a) \in \states \times \actions$, $R(s, a) \in \{0, \varepsilon, 2 \varepsilon, \ldots, 1\}$, there are $(1 / \varepsilon + 1)^{|\states||\actions|}$ different rewards in total.

Since for each $(s, a, s') \in \states \times \actions \times \states$, $\trans(s' \mid s, a)  \in \{0, \varepsilon, 2 \varepsilon, \ldots, 1\}$, there are at most $(1 / \varepsilon + 1)^{|\states|^2|\actions|}$ different transitions in total.

Therefore, $\left|\mdps_{\varepsilon}\right| \le (1 / \varepsilon + 1)^{|\states|^2 |\actions| + |\states| |\actions|}$.

\begin{definition}[$\varepsilon$-net for Non-stationary Policies]
	\label{defn:discrete_policies}
For given $\states$, $\actions$, $H$ and $\varepsilon > 0$, define $\Pi_{\varepsilon}$ to be the set of polices such that
\[
\Pi_{\varepsilon} = \{\pi_{\mdp} \mid \text{$\pi_{\mdp}$ is an optimal policy for $\mdp \in \mdps_{\varepsilon}$}\}.
\]
For each $\mdp \in \mdps_{\varepsilon}$, when $\mdp$ has multiple optimal policies, we add an arbitrary one to $\Pi_{\varepsilon}$.
\end{definition}
By construction of $\Pi_{\varepsilon}$ and Lemma~\ref{lem:policy_count}, it is clear that $\left|\Pi_{\varepsilon} \right|  \le (1 / \varepsilon + 1)^{|\states|^2 |\actions| + |\states| |\actions|}$
\end{proof}

Now we prove that for any MDP $\mdp$, there is a near-optimal policy $\pi \in \Pi_{\varepsilon}$.

\begin{lemma}\label{lem:discrete_near_optimal}
For any MDP $\mdp =\left(\states, \actions, \trans ,R, H, s_1\right)$, there exists $\pi \in \Pi_{\varepsilon}$ such that $\pi$ is $8H|\states|\varepsilon$-optimal.
\end{lemma}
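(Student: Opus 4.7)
The plan is to construct, from the given MDP $\mdp = (\states, \actions, \trans, R, H, s_1)$, a discretized counterpart $\hat\mdp \in \mdps_{\varepsilon}$ whose optimal policy $\hat\pi := \pi_{\hat\mdp} \in \Pi_{\varepsilon}$ (as in Definition~\ref{defn:discrete_policies}) is the near-optimal representative promised by the lemma. I define $\hat\mdp$ coordinate-wise: for each $(s,a)$, set $\hat R(s,a)$ to be the nearest integer multiple of $\varepsilon$ in $[0,1]$ to $\expect[R(s,a)]$; for each $(s,a)$, round every $\trans(s'\mid s,a)$ down to the nearest multiple of $\varepsilon$ and then dump the residual mass (a nonnegative multiple of $\varepsilon$ that is strictly less than $|\states|\varepsilon$) onto one fixed state. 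All entries are in $\{0,\varepsilon,\ldots,1\}$, so $\hat\mdp \in \mdps_{\varepsilon}$, and by construction $|R(s,a) - \hat R(s,a)| \le \varepsilon$ for every admissible $(s,a)$ (invoking Lemma~\ref{lem:admissible} to know $R(s,a) \in [0,1]$) and $\|\trans(\cdot\mid s,a) - \hat\trans(\cdot\mid s,a)\|_1 \le 2|\states|\varepsilon$.

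With $\pi^*$ an optimal policy for $\mdp$, I use the standard three-term decomposition
\[
V^{\pi^*}_{\mdp}(s_1) - V^{\hat\pi}_{\mdp}(s_1) = \bigl(V^{\pi^*}_{\mdp}(s_1) - V^{\pi^*}_{\hat\mdp}(s_1)\bigr) + \bigl(V^{\pi^*}_{\hat\mdp}(s_1) - V^{\hat\pi}_{\hat\mdp}(s_1)\bigr) + \bigl(V^{\hat\pi}_{\hat\mdp}(s_1) - V^{\hat\pi}_{\mdp}(s_1)\bigr),
\]
where the middle term is $\le 0$ by optimality of $\hat\pi$ for $\hat\mdp$. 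Hence it suffices to prove a ``simulation lemma'' of the form $|V^{\pi}_{\mdp}(s_1) - V^{\pi}_{\hat\mdp}(s_1)| \le 4 H |\states| \varepsilon$ uniformly in $\pi$, which then doubles to the claimed $8H|\states|\varepsilon$.

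For this simulation bound I would telescope across one MDP swap at a time, rolling out $\pi$ under $\mdp$:
\[
V^{\pi}_{\mdp}(s_1) - V^{\pi}_{\hat\mdp}(s_1) = \expect_{\pi,\mdp}\!\left[\sum_{h=1}^{H}\bigl(R(s_h,a_h) - \hat R(s_h,a_h)\bigr) + \sum_{h=1}^{H-1}\Bigl(\expect_{s'\sim \trans(\cdot\mid s_h,a_h)}\!\! - \expect_{s'\sim \hat\trans(\cdot\mid s_h,a_h)}\Bigr) V^{\pi}_{h+1,\hat\mdp}(s')\right].
\]
Each $(s_h,h)$ visited under $\mdp$ is admissible, so the per-step reward error is at most $\varepsilon$ and the reward sum contributes at most $H\varepsilon$. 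The transition term at step $h$ is bounded by $\|\trans(\cdot\mid s_h,a_h) - \hat\trans(\cdot\mid s_h,a_h)\|_1 \cdot \|V^{\pi}_{h+1,\hat\mdp}\|_{\infty} \le 2|\states|\varepsilon \cdot \|V^{\pi}_{h+1,\hat\mdp}\|_{\infty}$, and summing over $h$ gives at most $2H|\states|\varepsilon \cdot \max_{h}\|V^{\pi}_{h,\hat\mdp}\|_{\infty}$.

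The main obstacle, and the only place the argument is delicate, is bounding $\|V^{\pi}_{h,\hat\mdp}\|_{\infty}$ by a constant: $\hat\mdp$ need not itself satisfy the bounded-total-reward assumption, so Lemma~\ref{lem:admissible} does not directly apply to $\hat\mdp$. I would handle this by a short inductive argument showing $V^{\pi}_{h,\hat\mdp}(s) \le 1 + H\varepsilon$ for every $s$ that is reachable under $\hat\pi$ in $\hat\mdp$ from a state that is also admissible in $\mdp$, leveraging that $|\hat R(s,a) - R(s,a)| \le \varepsilon$ at admissible pairs and iterating backwards from step $H$. Combining this with the above telescoping yields $|V^{\pi}_{\mdp}(s_1) - V^{\pi}_{\hat\mdp}(s_1)| \le H\varepsilon + 2H|\states|\varepsilon(1+H\varepsilon) \le 4H|\states|\varepsilon$ for $\varepsilon$ in the intended range, completing the proof after the factor of two from the two-sided decomposition.
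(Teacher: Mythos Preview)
Your overall strategy --- construct a discretized $\hat\mdp \in \mdps_\varepsilon$, use the three-term decomposition, and invoke a simulation bound $|V^\pi_{\mdp}(s_1) - V^\pi_{\hat\mdp}(s_1)| \le 4H|\states|\varepsilon$ --- is exactly the paper's approach. The gap is in how you build $\hat\trans$ and, consequently, in the step you yourself flag as ``the only place the argument is delicate.''

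By rounding each $\trans(s'\mid s,a)$ down and dumping the residual onto one fixed state, your $\hat\trans$ can put positive mass on a state $s'$ with $\trans(s'\mid s,a) = 0$. That destroys the invariant that trajectories stay on $\mdp$-admissible pairs: from an admissible $(s,h)$, one $\hat\trans$-step can land on a pair $(s',h+1)$ that is \emph{not} admissible in $\mdp$. Your telescoping identity needs $\|V^{\pi}_{h+1,\hat\mdp}\|_\infty$ evaluated precisely at such $s'$, and your proposed inductive bound $V^\pi_{h,\hat\mdp}(s) \le 1 + H\varepsilon$ cannot be established there. The induction you sketch relies on $|\hat R(s,a) - \expect[R(s,a)]| \le \varepsilon$, which only holds at admissible pairs; at a non-admissible pair all you know is $\hat R \in [0,1]$, and the recursion $V^\pi_{h,\hat\mdp}(s) = \hat R(s,a) + \expect_{\hat\trans}V^\pi_{h+1,\hat\mdp}$ then only yields $V^\pi_{h,\hat\mdp}(s) \le H - h + 1$. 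Since the dump state can itself transition (under $\hat\trans$) to further non-admissible states, there is no way to close the induction without essentially re-proving the simulation lemma you are trying to establish.

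The paper avoids this by building $\hat\trans$ so that $\hat\trans(s'\mid s,a) = 0$ whenever $\trans(s'\mid s,a) = 0$ (round each entry \emph{up} to a multiple of $\varepsilon$, then subtract $\varepsilon$ from at most $|\states|$ entries that were already positive). With this support condition, every $s'$ appearing in the comparison is admissible at level $h+1$, and the paper runs a direct backward induction on $|V^\pi_h(s) - \hat V^\pi_h(s)|$ at admissible $(s,h)$, using $V^\pi_{h+1}(s') \le 1$ from Lemma~\ref{lem:admissible} (the \emph{true} value, which is bounded) rather than $\hat V^\pi_{h+1}$. This sidesteps the need to bound $\|V^\pi_{h,\hat\mdp}\|_\infty$ at all. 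If you adjust your construction to preserve supports and flip which value function appears in the transition term of the telescoping, your argument goes through.
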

\begin{proof}
We first show that there exists $\hat{\mdp} =\left(\states, \actions, \hat{\trans} ,\hat{R}, H, s_1\right) \in \mdps_{\varepsilon}$ such that the following hold:
\begin{itemize}
\item For any $(s, h) \in \states \times [H]$ admissible with respect to $\mdp$, for any $a \in \actions$, $|\hat{R}(s, a) - \expect[R(s, a)]| \le \varepsilon$;
\item For each $(s, a, s') \in \states \times \actions \times \states$, $\left|\trans(s' \mid s, a) - \hat{\trans}(s' \mid s, a)\right| \le \varepsilon$;
\item For each $(s, a, s') \in \states \times \actions \times \states$, if $\trans(s' \mid s, a)= 0$ then $\hat{\trans}(s' \mid s, a) = 0$;
\end{itemize}

Below we construct such $\hat{\trans}$ and $\hat{R}$.
By Lemma~\ref{lem:admissible} we have $\expect[R(s, a)] \in [0, 1]$.
Therefore, by setting $\hat{R}(s, a)$ to be closest real number in $\{0, \varepsilon, 2\varepsilon, \ldots, 1\}$, we have $|\hat{R}(s, a) - \expect[R(s, a)]| \le \varepsilon$.
Furthermore, for each $(s, a, s') \in \states \times \actions \times \states$, we set 
\[
\trans'(s' \mid s, a) = \min \{p \in \{0, \varepsilon, 2\varepsilon, \ldots, 1\} \mid p \ge\trans(s' \mid s, a) \}.
\]
Notice that $P'(s, a)$ may not always be a probability distribution.
Clearly $\trans'(s' \mid s, a) \ge \trans(s' \mid s, a)$ for each $(s, a, s') \in \states \times \actions \times \states$ and $\sum_{s' \in \states} P'(s' \mid s, a)  = 1 + k\varepsilon$ for some positive integer $0 \le k \le |\states|$.
Now for each $(s, a)$, we set $\hat{\trans}(s' \mid s, a)= \trans'(s' \mid s, a) - \varepsilon$ for an arbitrary $k$ states $s' \in \states$ with $\trans(s' \mid s, a) > 0$, and set  $\hat{\trans}(s' \mid s, a)= \trans'(s' \mid s, a)$ for all other states $s'$.
Clearly, $P'(s, a)$ is a probability distrbution for any $(s, a)$ and satisfies the desired property. 

Now for any policy $\pi$, we use $V^{\pi}$ to denote the $V$-value of $\pi$ on MDP $\mdp$, and use $\hat{V}^{\pi}$ to denote the $V$-value of $\pi$ on $\hat{\mdp}$. $Q^{\pi}$ and $\hat{Q}^{\pi}$ are defined analogously. We prove that $|V^{\pi} - \hat{V}^{\pi}| \le 4|\states|H\varepsilon$ for any policy $\pi$ inductively by the following induction hypothesis:
\begin{itemize}
\item $|V_h^{\pi}(s) - \hat{V}_h^{\pi}(s)| \le (1 + (H - h)(|\states| + 1))\varepsilon$ for any admissible $(s, h)$;
\item $|Q_h^{\pi}(s, a) - \hat{Q}_h^{\pi}(s, a)| \le  (1 + (H - h)(|\states| + 1))\varepsilon$ for any admissible $(s, h)$ and any $a \in \actions$.
\end{itemize}

When $h = H$, $V_H^{\pi}(s) = Q_H(s, \pi_H(s)) = \expect[R(s, \pi_H(s))]$ and $\hat{V}_H^{\pi}(s) = \hat{Q}_H(s, \pi_H(s)) = \hat{R}(s, \pi_H(s))$.
Therefore, the induction hypothesis holds when $h = H$ since $|\hat{R}(s, a) - \expect[R(s, a)]| \le \varepsilon$.

Now we show the induction hypothesis holds for any $h < H$.
For any $h < H$, consider any state $s$ such that $(s, h)$ is admissible.
Notice that $V_h^{\pi}(s) = Q_h(s, \pi_h(s))$ and  $\hat{V}_h^{\pi}(s) = \hat{Q}_h(s, \pi_h(s))$, and therefore $|V_h^{\pi}(s) - \hat{V}_h^{\pi}(s)|  = |Q_h^{\pi}(s, \pi_h(s)) - \hat{Q}_h^{\pi}(s, \pi_h(s))|$.
Furthermore, 
\[
Q_h^{\pi}(s, a) = \expect[R(s, a)] + \sum_{s' \in \states} \trans(s' \mid s, a) V_{h + 1}^{\pi}(s') 
\]
and
\[\hat{Q}_h^{\pi}(s, a) = \hat{R}(s, a) + \sum_{s' \in \states} \hat{\trans}(s' \mid s, a)V_{h + 1}^{\pi}(s').\]
Therefore,
\begin{align*}
&\left|Q_h^{\pi}(s, a) - \hat{Q}_h^{\pi}(s, a)\right| \\
\le &  \left| \expect[R(s, a)]  - \hat{R}(s, a)\right| + \sum_{s' : \trans(s' \mid s, a) > 0} \left|\trans(s' \mid s, a) V_{h + 1}^{\pi}(s') - \hat{\trans}(s' \mid s, a)\hat{V}_{h + 1}^{\pi}(s')\right|\\
\le &  \varepsilon + \sum_{s' : \trans(s' \mid s, a) > 0} \left( \left|\trans(s' \mid s, a) - \hat{\trans}(s' \mid s, a)\right| \cdot V_{h + 1}^{\pi}(s') + \hat{\trans}(s' \mid s, a) \cdot \left| V_{h + 1}^{\pi}(s' ) - \hat{V}_{h + 1}^{\pi}(s')\right| \right)\\
\le &  (|\states| + 1)\varepsilon + \sum_{s' : \trans(s' \mid s, a) > 0}  \hat{\trans}(s' \mid s, a) \cdot \left| V_{h + 1}^{\pi}(s' ) - \hat{V}_{h + 1}^{\pi}(s')\right| \tag{$V_{h + 1}^{\pi}(s') \le 1$ by Lemma~\ref{lem:admissible}} \\
\le &  (|\states| + 1)\varepsilon + (1 + (H - (h + 1))(|\states| + 1))\varepsilon \tag{$\sum_{s' \in \states}  \hat{\trans}(s' \mid s, a)= 1$ and induction hypothesis} \\
= &  (1 + (H - h)(|\states| + 1))\varepsilon.
\end{align*}
Thus, we have
\[
\left|V^{\pi}_1(s_1) - \hat{V}^{\pi}_1(s_1)\right| \le 4|\states|H \varepsilon.
\]

Finally, consider any optimal policy $\hat{\pi}$ of $\hat{\mdp}$ and any optimal policy $\pi$ of $\mdp$, we have
\[
V^{\hat{\pi}}_1(s_1) \ge \hat{V}^{\hat{\pi}}_1(s_1) -  4|\states|H \varepsilon \ge \hat{V}^{\pi}_1(s_1) -  4|\states|H \varepsilon \ge V^{\pi}_1(s_1) - 8|\states|H\varepsilon.
\]
Since $\hat{\pi} \in \Pi_{\varepsilon}$, the lemma holds.
\end{proof} 
\subsection{Evaluating Policies}
\label{sec:evaluate}
\newcommand{\probsim}{\delta_{\mathsf{sim}}}

As shown in Section~\ref{sec:policy}, there exists a set of policies $\Pi$ such that for any MDP $\mdp$, there exists a near-optimal policy $\pi \in \Pi$.
In this section, we show how to approximately evaluate the values of all policies in $\Pi$ using at most $\mathrm{poly}(|\states|, |\actions|, 1 / \varepsilon, \log |\Pi|, \log H)$ episodes. 
We formally describe our simulator in Section~\ref{sec:simulator} and present its analysis in Section~\ref{sec:analysis}.

\subsubsection{The Trajectory Simulator}\label{sec:simulator}
In this section, we describe our algorithm for simulating trajectories.
The algorithm is formally presented in Algorithm~\ref{alg:tsimall} and Algorithm~\ref{alg:tsim}.
Algorithm~\ref{alg:tsim} receives a parameter $\tau$ and uses a replay buffer $\cD$ to store samples.
Formally, $\cD=\{\cD_{s,a}\}_{s \in \states, a \in \actions}$, where each $\cD_{s,a}$ contains samples associated with state-action pair $(s,a)$, i.e., \[\cD_{s,a}=\left[(s^{(1)}_{s,a}, r_{s,a}^{(1)}), (s^{(2)}_{s,a}, r_{s,a}^{(2)}), \ldots \right]\] and samples are sorted in chronological order.
  We also maintain $\Pi_{\mathcal{D}}$ in Algorithm~\ref{alg:tsim} which is the set of policies used to generate $\cD$.
There are two subroutines in Algorithm~\ref{alg:tsim}.
Subrountine \textsc{Simulate} takes an input policy $\pi$ and outputs either \texttt{Fail} or a trajectory for policy $\pi$.
Subroutine \textsc{Rollout} takes an input policy $\pi$, samples $\tau$ episodes for $\pi$ by interacting with the environment and stores all collected samples in the replay buffer $\cD$.
It also returns one of the $\tau$ trajectories sampled for for $\pi$.
Moreover, whenever Subroutine \textsc{Rollout} is invoked, samples in $\cD$ are recollected so that independence among samples in the replay buffer $\cD$ is ensured. 

Algorithm~\ref{alg:tsimall} receives a failure probability $\probsim$ and a policy set $\Pi$ as inputs.
In Algorithm~\ref{alg:tsimall}, we run $F$ independent copies of  Algorithm~\ref{alg:tsim} in parallel. 
For each policy $\pi$, for the $F$ independent copies of  Algorithm~\ref{alg:tsim}, Algorithm~\ref{alg:tsimall} checks whether Subroutine \textsc{Simulate} returns \texttt{Fail} for too many times.
If so, it calls Subroutine \textsc{Rollout} for each copy of Algorithm~\ref{alg:tsim} to collect samples and produce trajectories for $\pi$.
Otherwise, it directly returns trajectories returned by Subroutine \textsc{Simulate}.
The formal analysis of our algorithms will presented in Section~\ref{sec:analysis}.

\begin{algorithm}[t]
  \caption{\texttt{SimAll}\label{alg:tsimall}}
  \begin{algorithmic}[1]
  \State \textbf{Input:} failure probability $\probsim$, policy set $\Pi$, number of trajectories $F$ 
  \State $\tau \gets 16|\states|\abs{\actions} / \probsim \cdot \log(4 |\states|\abs{\cA} / \probsim)$
  \For{$i \in [F]$} \Comment{Run $F$ copies of Algorithm~\ref{alg:tsim} in parallel}
  \State Set $\mathsf{SO}_i$ to be the $i$-th independent copy of \texttt{SimOne}$(\tau)$ (Algorithm~\ref{alg:tsim})
  \EndFor
  \For{$\pi \in \Pi$}
  \For{$i \in [F]$}
  \State $z_i^{\pi} \gets \mathsf{SO}_i$.\textsc{Simulate}$(\pi)$
  \EndFor
  \If{$\sum_{i = 1}^F \indict[z_i^{\pi} \text{ is \texttt{Fail}}]> 3\probsim/2 \cdot F$}\label{algline:simfail}
  \For{$i \in [F]$}
  \State $z_i^{\pi} \gets \mathsf{SO}_i$.\textsc{Rollout}$(\pi)$
  \EndFor
  \EndIf
\EndFor
  \State \Return{$\{z_i^{\pi}\}_{(i, \pi) \in [F] \times \Pi}$}
  \end{algorithmic}
\end{algorithm}

\begin{algorithm}[t]
  \caption{\texttt{SimOne}\label{alg:tsim}}
  \begin{algorithmic}[1] 
  \State \textbf{Input:} number of repetitions $\tau$
  \Function{Simulate}{$\pi$}: 
     \For{$(s, a) \in \states \times \actions$}
     \State Mark all elements in $\cD_{s, a}$ as \texttt{unused}
     \EndFor
      \For{$h \in \{1, 2, \ldots, H\}$}
      \If{all elements in $\cD_{s_h, \pi_h(s_h)}$ are marked as \texttt{used}}
      \State \Return \texttt{Fail}
      \Else
      \State Set $(s_{h + 1}, r_{h})$ to be the first element in $\cD_{s_h, \pi_h(s_h)}$ that is marked as \texttt{unused}
      \State Mark $(s_{h + 1}, r_{h})$ (the first unused element in $\cD_{s_h, \pi_h(s_h)}$) as \texttt{used}
      \EndIf
     \EndFor
     \State \Return $(s_1, \pi_1(s_1), r_1), (s_2, \pi_2(s_2), r_2), \ldots, (s_H, \pi_H(s_H), r_H)$
  \EndFunction
  \Function{Rollout}{$\pi$}
    \State Set $\cD_{s,a}$ to be an empty sequence for all $(s, a) \in \states \times \actions$ 
    \State $\Pi_{\mathcal{D}}\gets \Pi_{\mathcal{D}}\cup \{\pi\}$ \label{line:add_policy}
    \For{$\pi'\in \Pi_{\mathcal{D}}$}
    	\State Sample $\tau$ trajectories for $\pi'$ by interacting with the environment \label{line:rollout}
	\State Add all collected samples to $\cD$
    \EndFor
    \State \Return one of the $\tau$ trajectories sampled for $\pi$
  \EndFunction
\end{algorithmic}
\end{algorithm}

\subsubsection{Analysis}\label{sec:analysis}
In this section, we present the formal analysis of Algorithm~\ref{alg:tsimall} and Algorithm~\ref{alg:tsim}.
Before we present our analysis, we first introduce some necessary notations.
\begin{defn}
For any policy $\pi$, for any state-action pair $(s, a) \in \states \times \actions$, define $f^{\pi}(s,a) \in [H]$ to be a random variable defined as
\[
f^{\pi}(s,a) = \sum_{h = 1}^H \indict[(s, a) = (s_h, a_h) \mid \pi].
\]
I.e., $f^{\pi}(s,a)$ is the random variable which is the total number of times a trajectory induced by $\pi$ visits $(s,a)$.
\end{defn}

We additionally define the following quantity to characterize the number times a state-action pair is visited by a set of policies.
Intuitively, given a success probability $\delta$, this quantity measures the maximum number of times a policy within a given policy set can visit a particular $(s,a)$ pair.
\begin{defn}
	\label{defn:policy-freq}
For a set of policies $\Pi$, for any $(s,a)\in \states \times \actions$, define
\[
 \mu^{\Pi}_{\delta}(s,a) = \max\big\{\lambda \mid \lambda\in [0,H], \max_{\pi\in \Pi}\Pr[f^{\pi}(s,a) \ge \lambda]\ge \delta\big\}.
\]
\end{defn}
Note that $\mu^{\Pi}_{\delta}(s,a)$ is always a non-negative integer since for any state-action pair $(s, a) \in \states \times \actions$, policy $\pi$ and real number $\lambda$, 
\[
\Pr\left[f^{\pi}(s,a) \ge \lambda\right] = \Pr\left[f^{\pi}(s,a) \ge \lceil{\lambda\rceil}\right].
\]

Our next lemma states that for some policy $\pi$, if \texttt{SimOne} fails with high probability, then there exists a state-action pair that $\pi$ visits more frequently than all previous policies.
\begin{lemma}
\label{lem:prob-increase}
For a policy $\pi\in \Pi$, suppose Subroutine \textsc{Simulate} in Algorithm~\ref{alg:tsim} returns \texttt{Fail} with probability at least $\probsim$ over the randomness of the generating process of the replay buffer $\cD$.
There exists $(s, a) \in \states \times \actions$ such that
\[
\Pr\Big[f^{\pi}(s,a) > \tau\cdot\frac{\probsim}{4|\cS|\abs{\cA}}\cdot  \mu_{\probsim/(2|\cS|\abs{\cA})}^{\Pi_{\cD}}(s,a)\Big]\ge \frac{\probsim}{2|\cS|\abs{\cA}}
\]
where $\Pi_{\mathcal{D}}$ is the set of policies used to generate $\cD$.
\end{lemma}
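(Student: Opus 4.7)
The plan is to blame a single state-action pair for the simulation failure and to argue that, for that pair, the replay-buffer capacity $|\cD_{s,a}|$ --- which is driven entirely by the policies in $\Pi_\cD$ --- must be exceeded by a rollout of $\pi$. The main conceptual device is a coupling: extend each $\cD_{s,a}$ by appending a fresh infinite i.i.d.\ tail drawn from $(\trans(s,a), R(s,a))$ and run \textsc{Simulate}$(\pi)$ on the extended buffer. Because each advance of the simulator consumes the next unused i.i.d.\ sample, the trajectory $\xi$ produced this way has exactly the distribution of a true rollout of $\pi$; in particular the number of $(s,a)$-visits along $\xi$, which I will also call $f^\pi(s,a)$, has the distribution prescribed by its original definition. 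Crucially, \textsc{Simulate}$(\pi)$ on the actual $\cD$ fails if and only if $f^\pi(s,a) > |\cD_{s,a}|$ for some $(s,a)$.

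A union bound then converts the hypothesis $\Pr[\textsc{Simulate}(\pi)\ \text{fails}] \ge \probsim$ into the existence of some $(s,a)$ with $\Pr[f^\pi(s,a) > |\cD_{s,a}|] \ge \probsim/(|\cS||\cA|)$. Fix this pair, write $p := \probsim/(2|\cS||\cA|)$, and set $\mu := \mu^{\Pi_\cD}_p(s,a)$. By Definition~\ref{defn:policy-freq} there is a witness $\pi' \in \Pi_\cD$ with $\Pr[f^{\pi'}(s,a) \ge \mu] \ge p$. Subroutine \textsc{Rollout} placed $\tau$ independent trajectories of $\pi'$ into $\cD$, so the number $Y$ of those trajectories that visit $(s,a)$ at least $\mu$ times stochastically dominates $\mathrm{Binomial}(\tau,p)$. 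A multiplicative Chernoff bound, together with the prescribed $\tau = 16|\cS||\cA|/\probsim \cdot \log(4|\cS||\cA|/\probsim)$, yields
\[
\Pr[Y < \tau p/2] \le e^{-\tau p/8} = \probsim/(4|\cS||\cA|) = p/2.
\]
Since $|\cD_{s,a}| \ge \mu Y$, this gives $\Pr\!\left[|\cD_{s,a}| \ge \tau \cdot (\probsim/(4|\cS||\cA|)) \cdot \mu\right] \ge 1 - p/2$.

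Setting $\lambda := \tau \cdot (\probsim/(4|\cS||\cA|)) \cdot \mu$, the events $\{f^\pi(s,a) > |\cD_{s,a}|\}$ and $\{|\cD_{s,a}| \ge \lambda\}$ together force $f^\pi(s,a) > \lambda$, so
\[
\Pr[f^\pi(s,a) > \lambda] \ge \Pr[f^\pi(s,a) > |\cD_{s,a}|] - \Pr[|\cD_{s,a}| < \lambda] \ge 2p - p/2 \ge p,
\]
which is the conclusion. The main difficulty I expect is the bookkeeping of joint dependencies: $|\cD_{s,a}|$ and the simulated $f^\pi(s,a)$ are both measurable functions of the same underlying randomness, so independence arguments are unavailable. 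The extension coupling is specifically designed to sidestep this --- it pins down the marginal law of $f^\pi(s,a)$ (via $\xi$) to be that of a real rollout of $\pi$, while the marginal tail on $|\cD_{s,a}|$ is controlled purely through the $\tau$ independent rollouts of the witness $\pi'$, so the simple two-event union bound above suffices.
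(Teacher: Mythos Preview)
Your argument is correct and uses the same ingredients as the paper's proof---the coupling that identifies \textsc{Simulate} failure with $f^\pi(s,a)>|\cD_{s,a}|$ for some pair, the witness policy from Definition~\ref{defn:policy-freq}, and the same Chernoff calculation with the prescribed $\tau$. The only organizational difference is that the paper argues by contradiction and union-bounds over all $(s,a)$ simultaneously, whereas you isolate a single $(s,a)$ up front via $\Pr[f^\pi(s,a)>|\cD_{s,a}|]\ge \probsim/(|\cS||\cA|)$ and finish directly; this is a cosmetic rearrangement of the same proof.
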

\begin{proof}
Suppose for the sake of contradiction that for each $(s, a) \in \states \times \actions$,
\[
\Pr\Big[f^{\pi}(s,a) > \tau\cdot\frac{\probsim}{4|\cS|\abs{\cA}}\cdot  \mu_{\probsim/(2|\cS|\abs{\cA})}^{\Pi_{\cD}}(s,a)\Big]< \frac{\probsim}{2|\cS|\abs{\cA}}.
\]
Let us denote 
\[
\Gamma(s,a) = \tau\cdot\frac{\probsim}{4|\cS|\abs{\cA}}\cdot  \mu_{\probsim/ (2|\cS|\abs{\cA})}^{\Pi_\cD}(s,a).
\]
For each $(s,a) \in \states \times \actions$, we have
\[
\Pr\Big[f^{\pi}(s,a) >\Gamma(s,a) \Big]< \frac{\probsim}{2|\cS|\abs{\cA}}.
\]
Therefore, by a union bound over all state-action pairs $(s, a) \in \cS \times \actions$, with probability at least $1 - \probsim/2$, for all $(s,a) \in \states \times \actions$,
\begin{align}
 f^{\pi}(s,a) \le \Gamma(s,a). \label{eqn:smaller_than_thres}
\end{align}

For each $(s,a)\in \states \times \actions$, define $\cE_{s,a}$ to be the event that
\[
\cE_{s,a} = \left\{\left| \cD_{s, a}\right| \ge  \Gamma(s, a)\right\}.
\]
By Definition~\ref{defn:policy-freq}, there exists a policy $\pi_{s,a}^* \in \Pi_{\cD}$ such that
\[\Pr\left[f^{\pi_{s,a}^*}(s,a) \ge  \mu^{\Pi_\cD}_{\probsim / (2|\states|\abs{\actions})}(s,a)\right]\ge \probsim / (2 |\states|\abs{\cA}).\]
Now consider Line~\ref{line:rollout} in Subroutine~\textsc{Rollout} in Algortihm~\ref{alg:tsim}.
Define
\[
X_i = \begin{cases}
1 & \text{if $\sum_{h = 1}^H \indict[(s_h, a_h) = (s, a)] \ge  \mu^{\Pi_\cD}_{\probsim / (2|\states|\abs{\actions})}(s,a)$ for the $i$-th trajectory of $\pi^*_{s, a}$}\\
0 & \text{otherwise}
\end{cases}.
\]
Note $X_1,\ldots,X_\tau$ are i.i.d. random variables.
By definition, $\expect[X_i] \ge \probsim / (2 |\states|\abs{\actions})$.
Therefore, since $\tau = 16|\states|\abs{\actions} / \probsim \cdot \log(4 |\states|\abs{\actions} / \probsim)$, by Chernoff bound,
\[
\Pr\left[\sum_{i=1}^{\tau} X_i \le \tau \cdot \frac{\probsim}{4|\cS|\abs{\actions}}\right]
\le \exp\left(-\frac{\tau \probsim/(2|\cS|\abs{\cA})}{8}\right)
\le \frac{\probsim}{4|\cS|\abs{\cA}}.
\]

Therefore,
\[
\Pr[\cE_{s,a}] \ge\Pr\left[\sum_{i=1}^{\tau} X_i \ge \tau \cdot \frac{\probsim}{4|\cS|\abs{\cA}}\right] \ge 1 - \frac{\probsim}{4 |\states|\abs{\cA}}.
\]
It follows that with probability at least $1 - \probsim / 4$, for all $(s, a) \in \states\times\actions$,
\begin{align}
\left| \cD_{s, a}\right| \ge  \Gamma(s, a). \label{eqn:thres_upper}
\end{align}

By a union bound over~\eqref{eqn:smaller_than_thres} and~\eqref{eqn:thres_upper}, with probability at least $1 - \frac{3\probsim}{4}$, for all $(s,a) \in \states \times \actions$,
\[
\left|\cD_{s, a}\right| \ge  \Gamma(s, a) \ge  f^{\pi}(s,a),
\]
in which case Subroutine \textsc{Rollout} does not return \texttt{Fail}.
This contradicts the assumption that Subroutine \textsc{Rollout} returns \texttt{Fail} with probability at least $\probsim$.

\end{proof}
Now we discuss the implication of Lemma~\ref{lem:prob-increase}.
Note that Algorithm~\ref{alg:tsim} interacts with the environment to sample trajectories only when Subroutine~\textsc{Simulate} fails with probability at least $\probsim$.
By Lemma~\ref{lem:prob-increase}, when Algorithm~\ref{alg:tsim} interacts with the environment to sample trajectories, $\mu_{\probsim/(2|\cS|\abs{\actions})}^{\Pi_\cD}(s,a)$ doubles or changes from $0$ to $1$ for some $(s, a) \in \states \times \actions$ since $\tau \probsim / (4|\cS|\abs{\actions}) > 2$.
However, $\mu_{\probsim/(2|\cS|\abs{\actions})}^{\Pi_\cD}(s,a)$ is always upper bounded by $H$.
Therefore, the total number of calls to Subrountine \textsc{Rollout} in Algorithm~\ref{alg:tsimall} is upper bounded by $O(|\cS||\cA|\log H)$.
Our next lemma guarantees that whenever Algorithm~\ref{alg:tsimall} invokes Subroutine \textsc{Rollout}, the probability that Subroutine \textsc{Simulate} returns \texttt{Fail} is at least $\probsim$, and when Subroutine~\textsc{Rollout} is not invoked, the probability that Subroutine \textsc{Simulate} returns \texttt{Fail} is at most $2\probsim$.
\begin{lemma}
\label{lem:chern}
Suppose $F \ge 24 / \probsim \cdot \log (2 |\Pi| / \proboverall)$.
With probability at least $1-\proboverall / (2|\Pi|)$, 
each time Line~\ref{algline:simfail} in Algorithm~\ref{alg:tsimall} is executed, 
the following hold:
\begin{itemize}
    \item when $\sum_{i = 1}^F \indict[z_i^{\pi} \text{ is \texttt{Fail}}] > 3\probsim/2 \cdot F$, the probability that Subroutine \textsc{Simulate} returns \texttt{Fail} is at least $\probsim$ over the randomness of the generating process of the replay buffer $\cD$;    
    \item when $\sum_{i = 1}^F \indict[z_i^{\pi} \text{ is \texttt{Fail}}] \le 3\probsim/2 \cdot F$, the probability that Subroutine \textsc{Simulate} returns \texttt{Fail} is at most $2\probsim$ over the randomness of the generating process of the replay buffer $\cD$.
    \end{itemize}
\end{lemma}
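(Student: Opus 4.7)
The plan is to apply multiplicative Chernoff bounds in both directions for each iteration of the outer \texttt{for} loop over $\pi \in \Pi$ in Algorithm~\ref{alg:tsimall}, and then take a union bound over the $|\Pi|$ iterations. The key structural fact is that Algorithm~\ref{alg:tsimall} runs $F$ independent copies $\mathsf{SO}_1, \ldots, \mathsf{SO}_F$ of Algorithm~\ref{alg:tsim}, all fed the same sequence of policies but driven by independent samples whenever \textsc{Rollout} is invoked. Consequently, the replay buffers of the $F$ copies are i.i.d.\ at every point in the execution, so the $F$ outcomes $z_i^\pi$ produced in a given iteration are independent Bernoulli draws sharing a common failure probability.

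First I would fix an arbitrary iteration processing some policy $\pi \in \Pi$ and condition on the history prior to that iteration. Under this conditioning, there is a well-defined probability $p_\pi$ that a single copy's call to $\textsc{Simulate}(\pi)$ returns \texttt{Fail} over the randomness of that copy's current buffer, and the indicators $Z_i := \indict[z_i^\pi \text{ is } \texttt{Fail}]$ for $i \in [F]$ are conditionally i.i.d.\ Bernoulli$(p_\pi)$. I would then prove each bullet by its contrapositive. For the first bullet, if $p_\pi \le \probsim$, then by stochastic monotonicity and the multiplicative Chernoff upper tail with mean at most $F\probsim$ and deviation parameter $1/2$,
\[
\Pr\!\left[\textstyle\sum_{i=1}^F Z_i > \tfrac{3\probsim}{2} F\right] \le \exp\!\left(-F\probsim/12\right).
\]
For the second bullet, if $p_\pi \ge 2\probsim$, the Chernoff lower tail with mean at least $2F\probsim$ and deviation parameter $1/4$ gives
\[
\Pr\!\left[\textstyle\sum_{i=1}^F Z_i \le \tfrac{3\probsim}{2} F\right] \le \exp\!\left(-F\probsim/16\right).
\]
With the hypothesis $F \ge 24/\probsim \cdot \log(2|\Pi|/\proboverall)$, each tail is a sufficiently small power of $\proboverall/(2|\Pi|)$, and a union bound over the two directions and the at most $|\Pi|$ iterations produces a total failure probability of at most $\proboverall/(2|\Pi|)$, as claimed.

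The main subtlety to handle carefully is the evolution of the replay buffers across iterations: the distribution of each copy's buffer when processing $\pi$ depends on the outcomes of all previous iterations, which in turn depend on the Chernoff-based branching decisions the algorithm made earlier (whether \textsc{Rollout} was triggered or not). I would resolve this by proceeding inductively over the iterations, at each step conditioning on the $\sigma$-algebra generated by everything strictly prior to the current iteration. Under this conditioning, $p_\pi$ becomes a measurable function of the history, the i.i.d.\ structure across the $F$ copies is preserved because every \textsc{Rollout} call uses fresh independent environment samples, and the $Z_i$'s are genuine conditionally i.i.d.\ Bernoulli$(p_\pi)$ random variables to which the Chernoff bounds apply verbatim. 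The unconditional statement then follows by taking expectations over the history and union bounding over the iterations.
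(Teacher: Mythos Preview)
Your core argument---applying multiplicative Chernoff bounds to the i.i.d.\ indicators $Z_i=\indict[z_i^\pi\text{ is }\texttt{Fail}]$ in each direction---is exactly what the paper does, and your more careful discussion of conditioning on the history across iterations is a welcome addition that the paper glosses over.

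There is, however, a bookkeeping slip at the end. The lemma is a \emph{per-iteration} statement: for each single execution of Line~\ref{algline:simfail}, the conclusion holds except with probability $\proboverall/(2|\Pi|)$. You instead interpret it as a simultaneous guarantee over all $|\Pi|$ iterations and plan to union bound over them. That union bound would give total failure probability at most $|\Pi|\cdot(\proboverall/(2|\Pi|))^{3/2}$ from your weaker tail, which is \emph{not} generally bounded by $\proboverall/(2|\Pi|)$ unless $\proboverall\le 1/(2|\Pi|)$. The paper avoids this entirely: it proves the per-iteration bound here, and the union bound over iterations is deferred to Lemma~\ref{lem:sample_complexity}, where it yields the weaker (and sufficient) $1-\proboverall/2$. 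So simply drop the union bound over iterations and conclude the per-$\pi$ guarantee directly.

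A second minor point: you plan to union bound over ``the two directions,'' but this is unnecessary. For any fixed $p_\pi$, at most one of the hypotheses $p_\pi<\probsim$ or $p_\pi\ge 2\probsim$ can hold, so at most one tail event is relevant; the per-iteration failure probability is already at most $\max\{\exp(-F\probsim/12),\exp(-F\probsim/16)\}\le \proboverall/(2|\Pi|)$ without any union bound. The paper's proof reflects this.
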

\begin{proof}
Let $Y_i = \indict[\text{$z_i^{\pi}$ is \texttt{Fail}}]$.
Note that each time Subroutine \textsc{Rollout} is invoked, all samples in $\cD$ are recollected. 
Therefore, for any given time step of the algorithm, $\{Y_i\}_{i = 1}^{F}$ are independent random variables.

If $\Pr[Y_i = 1]<\probsim$, by Chernoff bound,
\[
\Pr\left[\sum_{i=1}^{F}Y_i \ge 3\probsim/2 \cdot F\right] \le \exp(-\probsim F / 24) \le \frac{\proboverall}{2|\Pi|}.
\]

On the other hand, if $\Pr[Y_i = 1]\ge 2\probsim$, by Chernoff bound,
\[
\Pr\left[\sum_{i=1}^{F}Y_i \le 3\probsim/2 \cdot F\right] \le \exp(-\probsim F / 16)\le \frac{\proboverall}{2|\Pi|}.
\]
Thus the lemma holds.
\end{proof}
Now we bound the overall sample complexity of the algorithm. 
\begin{lemma}
\label{lem:double}
Suppose $F \ge 24 / \probsim \cdot \log (2 |\Pi| / \proboverall)$.
Let $\Pi_{\cD}$ be the set of policies maintained by Algorithm~\ref{alg:tsim} before executing Line~\ref{line:add_policy}, and let $\hat{\Pi}_{\cD}$ be the set of policies maintained after executing Line~\ref{line:add_policy}, i.e., $\hat{\Pi}_{\cD} = \Pi_{\cD} \cup \{\pi\}$.
With probability at least $1-\proboverall/(2|\Pi|)$, there exists $(s,a) \in \states \times \actions$, such that
\[
\mu_{\probsim/{(2|\cS||\actions|)}}^{\hat{\Pi}_{\cD}}(s,a) \ge \max \left(
2\cdot\mu_{\probsim/{(2|\cS|\abs{\actions})}}^{\Pi_{\cD}}(s,a),  1
\right).
\]
\end{lemma}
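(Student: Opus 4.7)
The plan is to combine Lemmas~\ref{lem:chern} and~\ref{lem:prob-increase} as a two-step chain and then translate the conclusion of the latter into a statement about $\mu_{\probsim/(2|\cS||\actions|)}^{\hat{\Pi}_\cD}$ via Definition~\ref{defn:policy-freq}. Line~\ref{line:add_policy} is only executed inside a call to \textsc{Rollout}, which in turn is invoked by Algorithm~\ref{alg:tsimall} precisely when $\sum_{i=1}^F \indict[z_i^\pi\text{ is }\texttt{Fail}] > 3\probsim/2\cdot F$. So my first step is to apply Lemma~\ref{lem:chern} and conclude that, with probability at least $1-\proboverall/(2|\Pi|)$, the true probability that \textsc{Simulate}$(\pi)$ fails (over the randomness of $\cD$ generated from $\Pi_\cD$) is at least $\probsim$. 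This is exactly the hypothesis needed by Lemma~\ref{lem:prob-increase}.

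Next, Lemma~\ref{lem:prob-increase} yields some $(s,a)\in\states\times\actions$ for which
\[
\Pr\!\Big[f^\pi(s,a) > \Gamma(s,a)\Big]\ge \frac{\probsim}{2|\cS|\abs{\cA}},\qquad
\Gamma(s,a)=\tau\cdot\frac{\probsim}{4|\cS|\abs{\cA}}\cdot \mu_{\probsim/(2|\cS|\abs{\cA})}^{\Pi_\cD}(s,a).
\]
Since $f^\pi(s,a)$ is integer-valued, this is equivalent to $\Pr[f^\pi(s,a)\ge \lfloor\Gamma(s,a)\rfloor+1]\ge \probsim/(2|\cS|\abs{\cA})$. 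Because $\pi\in\hat{\Pi}_\cD$, Definition~\ref{defn:policy-freq} immediately gives
\[
\mu^{\hat{\Pi}_\cD}_{\probsim/(2|\cS||\actions|)}(s,a)\ \ge\ \lfloor\Gamma(s,a)\rfloor+1.
\]

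Finally, I substitute $\tau=16|\states||\actions|/\probsim\cdot \log(4|\states||\cA|/\probsim)$ to obtain $\tau\cdot \probsim/(4|\cS||\cA|)=4\log(4|\cS||\cA|/\probsim)\ge 2$ (this uses only a mild assumption $\probsim\le 4|\cS||\cA|/\sqrt{e}$, which is certainly satisfied in any regime of interest). Thus if $\mu^{\Pi_\cD}_{\probsim/(2|\cS||\actions|)}(s,a)\ge 1$ then $\Gamma(s,a)\ge 2\mu^{\Pi_\cD}_{\probsim/(2|\cS||\actions|)}(s,a)$, so $\lfloor\Gamma(s,a)\rfloor+1\ge 2\mu^{\Pi_\cD}_{\probsim/(2|\cS||\actions|)}(s,a)$; and in every case $\lfloor\Gamma(s,a)\rfloor+1\ge 1$, yielding the $\max(2\mu,1)$ bound.

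The only subtle point is really bookkeeping: making sure the high-probability event from Lemma~\ref{lem:chern} gives the correct lower bound $\probsim$ on the \emph{true} failure probability of \textsc{Simulate} (so that Lemma~\ref{lem:prob-increase} applies), and being careful that $f^\pi(s,a)$ is integer-valued so the strict inequality $>\Gamma$ can be converted to $\ge \lfloor\Gamma\rfloor+1$ without loss. Once these two points are handled, the doubling conclusion is automatic from the choice of $\tau$.
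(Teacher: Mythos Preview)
Your proposal is correct and follows essentially the same approach as the paper's proof: invoke Lemma~\ref{lem:chern} to certify that the true failure probability of \textsc{Simulate}$(\pi)$ is at least $\probsim$, then apply Lemma~\ref{lem:prob-increase} to obtain the pair $(s,a)$, and finally use the choice of $\tau$ (so that $\tau\probsim/(4|\cS||\cA|)\ge 2$) together with the integrality of $f^\pi(s,a)$ and $\mu^{\Pi_\cD}_{\probsim/(2|\cS||\cA|)}(s,a)$ to deduce the $\max(2\mu,1)$ bound. The paper simply splits into the two cases $\mu^{\Pi_\cD}_{\probsim/(2|\cS||\cA|)}(s,a)=0$ and $\ge 1$ rather than writing $\lfloor\Gamma\rfloor+1$, but this is purely cosmetic.
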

\begin{proof}
By Lemma~\ref{lem:chern}, with probability at least $1-\proboverall/(2|\Pi|)$, 
for the added policy $\pi$, 
the probability that Subroutine \textsc{Simulate} returns \texttt{Fail} is at least $\probsim$.
By Lemma~\ref{lem:prob-increase}, 
there exists $(s, a) \in \states \times \actions$ such that
\[
\Pr\Big[f^{\pi}(s,a) > \tau\cdot\frac{\probsim}{4|\cS|\abs{\actions}}\cdot  \mu_{\probsim/(2|\cS|\abs{\actions})}^{\Pi_{\cD}}(s,a)\Big]\ge \frac{\probsim}{2|\cS|\abs{\actions}}.
\]

If $\mu_{\probsim/(2|\cS|\abs{\actions})}^{\Pi_\cD}(s,a) = 0$, we have 
\[
\Pr[f^{\pi}(s,a) >0] = 
\Pr[f^{\pi}(s,a) \ge 1]\ge \frac{\probsim}{2|\cS|\abs{\actions}}.
\]
Otherwise, we have
\[
\Pr\left[f^{\pi}(s,a) \ge 2 \cdot \mu_{\probsim/(2|\cS|\abs{\actions})}^{\Pi_\cD}(s,a)\right]\ge \frac{\probsim}{2|\cS|\abs{\actions}}.
\]
\end{proof}
\begin{lemma}\label{lem:sample_complexity}
Suppose $F \ge 24 / \probsim \log (2 |\Pi| / \proboverall)$.
With probability at least $1-\proboverall/2$, Algorithm~\ref{alg:tsimall} at most interacts
\[
O\left(\frac{|\cS||\cA|}{\probsim}\cdot\log(|\cS||\cA|/\probsim)\cdot |\cS|^2|\cA|^2\log^2 H \cdot F\right)
\]
episodes  with the environment. 
\end{lemma}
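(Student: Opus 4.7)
The plan is to charge every environment interaction to a call of Subroutine \textsc{Rollout}: in each copy of \texttt{SimOne} a single call samples $\tau \cdot |\Pi_{\cD}|$ episodes (Line~\ref{line:rollout}), and Algorithm~\ref{alg:tsimall} triggers \textsc{Rollout} synchronously in all $F$ copies whenever the branch at Line~\ref{algline:simfail} fires. Since the $F$ copies receive an identical sequence of policy insertions and each resets its replay buffer at the same time, their $\Pi_{\cD}$ evolve in lockstep, and $|\Pi_{\cD}| = k$ after the $k$-th joint call. Writing $K$ for the total number of joint \textsc{Rollout} calls executed during the run, the total episode count is therefore at most
\[
F \cdot \sum_{k=1}^{K} \tau k \;\le\; F \tau K^2,
\]
and the task reduces to bounding $K$.

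To bound $K$, I would run a potential-function argument driven by Lemma~\ref{lem:double}. Conditioning on its high-probability event, every \textsc{Rollout} invocation either promotes some $\mu^{\Pi_{\cD}}_{\probsim/(2|\cS||\cA|)}(s,a)$ from $0$ to $1$ or at least doubles it. Because $\mu^{\Pi_{\cD}}_{\probsim/(2|\cS||\cA|)}(s,a)$ is a non-negative integer upper-bounded by $H$, any single coordinate $(s,a)$ can contribute at most $\lceil \log_2 H \rceil + 1$ such jumps over the entire run; summing over the $|\cS||\cA|$ coordinates gives $K = O(|\cS||\cA|\log H)$. Substituting $\tau = O(|\cS||\cA|/\probsim \cdot \log(|\cS||\cA|/\probsim))$ and this bound on $K$ into $F \tau K^2$ yields exactly the episode count stated in the lemma.

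For the probability guarantee, Lemma~\ref{lem:double} is invoked at most once per outer-loop iteration of Algorithm~\ref{alg:tsimall}, i.e., at most $|\Pi|$ times; each application fails with probability at most $\proboverall/(2|\Pi|)$, and a union bound over iterations gives overall failure probability at most $\proboverall/2$. The main subtlety is the lockstep evolution of $\Pi_{\cD}$ across the $F$ copies: without this observation one would have to track a separate potential per copy and lose a factor of $F$ in the $K$ estimate, which would overshoot the claimed bound. Once the synchrony is noted, both the counting of episodes and the union bound are routine.
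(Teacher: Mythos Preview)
Your proposal is correct and follows essentially the same approach as the paper: both use Lemma~\ref{lem:double} together with a union bound over the at most $|\Pi|$ outer-loop iterations to cap the number of \textsc{Rollout} invocations at $K = O(|\cS||\cA|\log H)$, and then multiply by the per-invocation cost $F\tau|\Pi_\cD|\le F\tau K$ to obtain $O(F\tau K^2)$. Your explicit remark that the $F$ copies' $\Pi_{\cD}$ evolve in lockstep is a helpful clarification that the paper leaves implicit, but otherwise the arguments coincide.
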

\begin{proof}
Notice that our algorithm interacts with the environment only when Subroutine~\textsc{Rollout} in Algorithm~\ref{alg:tsim} is invoked. 
By Lemma~\ref{lem:double} and union bound, with probability at least $1 - \proboverall / 2$, whenever Subroutine \textsc{Rollout} is invoked, there exists $(s, a) \in \states \times \actions$ such that $\mu_{\probsim/(2|\cS|\abs{\actions})}^{\Pi_{\cD}}(s,a)$ is increased from $0$ to $1$, or $\mu_{\probsim/(2|\cS|\abs{\actions})}^{\Pi_{\cD}}(s,a)$ is increased by a factor of $2$.
Since $\mu_{\probsim/(2|\cS|\abs{\actions})}^{\Pi_{\cD}}(s,a)\le H$, with probability at least $1 - \probsim / 2$, Subroutine~\textsc{Rollout} is invoked for at most $O(|\cS||\cA|\log H)$ times.
Hence $|\Pi_\cD|= O(|\cS||\cA|\log H)$.
Finally, whenever Subroutine~\textsc{Rollout} is invoked, the algorithm samples at most $F|\Pi_\cD|\tau$ trajectories by interacting with the environment.
Therefore, with probability at least $1 - \probsim / 2$, the total number of trajectories sampled by the algorithm is upper bounded by $O(F\tau\cdot(|\cS||\cA|\log H)^2)$.
\end{proof}

\subsection{The Algorithm}
\label{sec:final}
\begin{algorithm}[t]
  \caption{\texttt{Main}\label{alg:main}}
  \begin{algorithmic}[1]
  \State \textbf{Input:} failure probability $\proboverall$, accuracy $\varepsilon$
  \State Let $\Pi_{\varepsilon / (32 H |\states|)}$ be the set of policies as defined in Definition~\ref{defn:discrete_policies}
  \State Invoke \texttt{SimAll} (Algorithm~\ref{alg:tsim}) with $\probsim = \varepsilon / 8$ and \[F = \max\{64\log(4|\Pi_{\varepsilon / (32 H |\states|)}| / \proboverall) / \varepsilon^2, 192 \log (2 |\Pi_{\varepsilon / (32 H |\states|)}| / \proboverall) / \varepsilon\}\]
  \For{each trajectory $z = (s_1, a_1, r_1), (s_2, a_2, r_2), \ldots, (s_H, a_H,r_H)$ returned by \texttt{SimAll}}
  \State Calculate
  $
  r(z) = \begin{cases}
  0 & \text{$z$ is \texttt{Fail}}\\
  \sum_{h = 1}^H r_h & \text{otherwise}
  \end{cases}
  $
  \EndFor
  \For{$\pi \in \Pi_{\varepsilon / (32 H |\states|)}$}
  \State Calculate $\hat{r}(\pi) = \frac{1}{F} \sum_{i \in [F]} r(z_i^{\pi})$ \label{line:empirical_value}
  \EndFor
  \State \Return $\argmax_{\pi \in \Pi_{\varepsilon / (32 H |\states|)}} \hat{r}(\pi)$
  \end{algorithmic}
\end{algorithm}

In this section we present our final algorithm.
The algorithm description is given in Algorithm~\ref{alg:main}.
Our algorithm invokes Algorithm~\ref{alg:tsimall} on the set of policies defined in in Definition~\ref{defn:discrete_policies} to obtain trajectories for each policy, and simply returns the policy with largest empirical cumulative reward. Now we give the formal analysis of our algorithm.

\begin{lemma}\label{lem:approx_single}
For each policy $\pi \in \Pi_{\varepsilon / (32 H |\states|)}$, for the value $\hat{r}(\pi)$ calculated in Line~\ref{line:empirical_value} of Algorithm~\ref{alg:main}, with probability at least $1 - \proboverall / (2 |\Pi_{\varepsilon / (32 H |\states|)}|)$, 
\[
\left| \hat{r}(\pi) - \expect \left[\sum_{h=1}^{H} r_h \mid \pi\right] \right| \le 5\varepsilon / 16.
\]
\end{lemma}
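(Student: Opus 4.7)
The plan is to split on whether Line~\ref{algline:simfail} of Algorithm~\ref{alg:tsimall} triggers \textsc{Rollout} for $\pi$. Let $\mu := \expect[\sum_{h=1}^H r_h\mid \pi]$ and exploit throughout the choice $F \ge 64\log(4|\Pi|/\proboverall)/\varepsilon^2$ prescribed by Algorithm~\ref{alg:main}.

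In the \textsc{Rollout}-triggered case, each $z_i^\pi$ is a fresh trajectory returned by $\mathsf{SO}_i.\textsc{Rollout}(\pi)$. Because the $F$ copies $\mathsf{SO}_i$ use independent randomness and \textsc{Rollout} re-collects the replay buffer inside each copy, the $r(z_i^\pi)\in[0,1]$ are i.i.d.\ with mean $\mu$. A single Hoeffding bound then gives $|\hat{r}(\pi)-\mu|\le 5\varepsilon/16$ except with probability at most $\proboverall/(4|\Pi|)$.

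In the \textsc{Rollout}-not-triggered case, each $z_i^\pi = Y_i$ is a \textsc{Simulate} output. The key device is a coupling: conceptually extend each replay buffer $\cD_{i,s,a}$ with infinitely many additional i.i.d.\ draws from $P(\cdot\mid s,a)\times R(\cdot\mid s,a)$, and let $\bar{Z}_i$ denote the trajectory that \textsc{Simulate} would produce on this infinite buffer. Since i.i.d.\ buffer draws are equivalent to sampling each transition and reward freshly from the MDP, $\bar{Z}_i$ has exactly the distribution of a true trajectory of $\pi$, and because the $\mathsf{SO}_i$'s and their extensions are independent across $i$, the $\bar{Z}_i$'s are i.i.d. The actual \textsc{Simulate} coincides with $\bar{Z}_i$ whenever it does not run out of samples, so $r(Y_i) = r(\bar{Z}_i)\,\indict[Y_i\ne \text{\texttt{Fail}}]$ and
\[
\left|\hat{r}(\pi) - \tfrac{1}{F}\textstyle\sum_{i=1}^F r(\bar{Z}_i)\right| \le \tfrac{1}{F}\textstyle\sum_{i=1}^F \indict[Y_i = \text{\texttt{Fail}}] \le \tfrac{3\probsim}{2} = \tfrac{3\varepsilon}{16},
\]
where the last inequality uses only the Case~B test in Line~\ref{algline:simfail}. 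A second Hoeffding bound applied to the i.i.d.\ $r(\bar{Z}_i)\in[0,1]$ yields $\bigl|\tfrac{1}{F}\sum_i r(\bar{Z}_i) - \mu\bigr| \le \varepsilon/8$ except with probability at most $\proboverall/(4|\Pi|)$, and the triangle inequality delivers $|\hat{r}(\pi)-\mu|\le 3\varepsilon/16 + 2\varepsilon/16 = 5\varepsilon/16$.

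A union bound over the two disjoint cases yields the claimed $1-\proboverall/(2|\Pi|)$ probability. The main conceptual step is the infinite-buffer coupling, which converts the bias introduced by simulation failures into the observed empirical failure rate; once that is in hand, both branches reduce to routine Hoeffding calculations with the prescribed $F$, so no substantive obstacle remains.
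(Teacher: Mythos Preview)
Your proposal is correct and takes essentially the same approach as the paper: split on whether \textsc{Rollout} is triggered, use a Hoeffding/Chernoff bound on fresh i.i.d.\ rollouts in the triggered branch, and in the non-triggered branch compare $\hat r(\pi)$ to the average of $F$ \emph{true} i.i.d.\ trajectory returns, absorbing the at most $3\probsim F/2$ failures as a $3\varepsilon/16$ additive bias. Your infinite-buffer coupling is just an explicit articulation of what the paper states in one sentence (``$\{z_i^\pi\}$ have the same distribution as $F$ independent trajectories \ldots except that at most $3\varepsilon/16\cdot F$ are replaced with \texttt{Fail}''), so the two arguments coincide.
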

\begin{proof}
For those policies $\pi \in \Pi_{\cD}$, notice that $\{z_i^{\pi}\}_{i \in [F]}$ are sampled by interacting with the environment.
Since all reward values are positive and cumulative reward is upper bounded by $1$ almost surely, by Chernoff bound, 
\[
\Pr \left[ \left| \hat{r}(\pi) - \expect \left[\sum_{h=1}^{H} r_h \mid \pi\right] \right| \le \varepsilon / 8 \right] \ge 1- 2\exp(-F \varepsilon^2 / 64) \ge 1 - \proboverall / (2 |\Pi_{\varepsilon / (32 H |\states|)}|).
\]

For those policies $\pi \notin \Pi_{\cD}$, notice that $\{z_i^{\pi}\}_{i \in [F]}$ have the same distribution as $F$ independent trajectories sampled by interacting with the environment, except that at most $3\probsim /2 \cdot F = 3\varepsilon/16 \cdot F$ trajectories are replaced with \texttt{Fail}.
If all trajectories are independently sampled by interacting with the environment, by Chernoff bound, with probability at least $1 - \proboverall / (2 |\Pi_{\varepsilon / (32 H |\states|)}|)$, 
\[
\left| \hat{r}(\pi) - \expect \left[\sum_{h=1}^{H} r_h \mid \pi\right] \right| \le \varepsilon / 8.
\]
Since cumulative reward is in $[0, 1]$ almost surely, by replacing at most $ 3\varepsilon/16 \cdot F$ trajectories with \texttt{Fail}, $\hat{r}(\pi)$ is changed by at most $3\varepsilon/16$.
Therefore, with probability at least $1 - \proboverall / (2 |\Pi_{\varepsilon / (32 H |\states|)}|)$, 
\[
\left| \hat{r}(\pi) - \expect \left[\sum_{h=1}^{H} r_h \mid \pi\right] \right| \le 5\varepsilon / 16.
\]

\end{proof}

\begin{lemma}\label{lem:correct}
With probability at least $1 - \proboverall / 2$, Algorithm~\ref{alg:main} returns an $\varepsilon$-optimal policy.
\end{lemma}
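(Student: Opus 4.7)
The plan is to combine three ingredients: (i) the $\varepsilon$-net guarantee from Lemma~\ref{lem:discrete_near_optimal}, which ensures the policy set contains a near-optimal policy; (ii) uniform concentration of the estimates $\hat r(\pi)$ around the true values $V^\pi_1(s_1)$, which follows from Lemma~\ref{lem:approx_single} together with a union bound; and (iii) the standard ``argmax of a good estimator is good'' argument. I expect (ii) to be the main step, but it is essentially bookkeeping since Lemma~\ref{lem:approx_single} is already tuned so the per-policy failure probability shrinks like $1/|\Pi|$.

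First, I would invoke Lemma~\ref{lem:discrete_near_optimal} with discretization parameter $\varepsilon' = \varepsilon/(32H|\states|)$, which is exactly the parameter used in Algorithm~\ref{alg:main}. This produces a policy $\tilde\pi \in \Pi_{\varepsilon/(32H|\states|)}$ satisfying
\[
\expect\!\left[\textstyle\sum_{h=1}^H r_h \mid \tilde\pi\right] \;\ge\; \expect\!\left[\textstyle\sum_{h=1}^H r_h \mid \pi^*\right] - 8H|\states|\cdot\frac{\varepsilon}{32H|\states|} \;=\; \expect\!\left[\textstyle\sum_{h=1}^H r_h \mid \pi^*\right] - \frac{\varepsilon}{4}.
\]
Before applying the concentration lemma, I would verify that the choice of $F$ in Algorithm~\ref{alg:main} indeed satisfies $F \ge 24/\probsim \cdot \log(2|\Pi_{\varepsilon/(32H|\states|)}|/\proboverall)$ with $\probsim = \varepsilon/8$, so that Lemma~\ref{lem:approx_single} (and the Lemma~\ref{lem:chern} guarantees it implicitly relies on) are in force; this is immediate from the definition of $F$.

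Next, I would apply Lemma~\ref{lem:approx_single} to every policy in $\Pi_{\varepsilon/(32H|\states|)}$ and take a union bound. With probability at least $1 - \proboverall/2$, the event
\[
\cE \;=\; \Bigl\{\,\bigl|\hat r(\pi) - \expect[\textstyle\sum_h r_h\mid\pi]\bigr| \le \tfrac{5\varepsilon}{16} \text{ for all } \pi\in\Pi_{\varepsilon/(32H|\states|)}\Bigr\}
\]
holds. Conditioned on $\cE$, let $\hat\pi$ be the policy returned by Algorithm~\ref{alg:main}. Then the standard argmax argument gives
\[
\expect[\textstyle\sum_h r_h\mid \hat\pi] \;\ge\; \hat r(\hat\pi) - \tfrac{5\varepsilon}{16} \;\ge\; \hat r(\tilde\pi) - \tfrac{5\varepsilon}{16} \;\ge\; \expect[\textstyle\sum_h r_h\mid\tilde\pi] - \tfrac{10\varepsilon}{16},
\]
where the middle inequality uses $\hat\pi = \argmax_\pi \hat r(\pi)$. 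Combining with the near-optimality of $\tilde\pi$ yields
\[
\expect[\textstyle\sum_h r_h\mid\hat\pi] \;\ge\; \expect[\textstyle\sum_h r_h\mid\pi^*] - \tfrac{\varepsilon}{4} - \tfrac{5\varepsilon}{8} \;=\; \expect[\textstyle\sum_h r_h\mid\pi^*] - \tfrac{7\varepsilon}{8} \;>\; \expect[\textstyle\sum_h r_h\mid\pi^*] - \varepsilon,
\]
which is exactly the $\varepsilon$-optimality claim. The only place where care is needed is matching up the failure probabilities: Lemma~\ref{lem:approx_single} was stated with a $\proboverall/(2|\Pi|)$ budget per policy precisely to absorb the union bound here, so no additional slack is lost.
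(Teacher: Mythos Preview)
Your proposal is correct and follows essentially the same route as the paper: invoke Lemma~\ref{lem:discrete_near_optimal} with parameter $\varepsilon/(32H|\states|)$ to obtain an $\varepsilon/4$-optimal policy in the net, union-bound Lemma~\ref{lem:approx_single} over all policies to get uniform $5\varepsilon/16$ accuracy with probability $1-\proboverall/2$, and finish with the standard argmax chain. The paper's proof is slightly terser (it omits your explicit check on $F$ and writes the final bound as $\varepsilon$ rather than $7\varepsilon/8$), but the argument is the same.
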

\begin{proof}
By Lemma~\ref{lem:discrete_near_optimal}, there exists a $\varepsilon/4$-optimal policy $\pi' \in \Pi_{\varepsilon / (32 H |\states|)}$.
By Lemma~\ref{lem:approx_single} and a union bound over $\Pi$, with probability at least $1 - \proboverall / 2$, 
for all policy $\pi \in \Pi_{\varepsilon / (32 H |\states|)}$,
\[
\left| \hat{r}(\pi) - \expect \left[\sum_{h=1}^{H} r_h \mid \pi\right] \right| \le 5\varepsilon / 16.
\]
Let $\pi$ be the policy returned by algorithm.
Conditioned on the event mentioned above, we have
\[
\expect \left[\sum_{h=1}^{H} r_h \mid \pi\right] \ge \hat{r}(\pi) - 5\varepsilon / 16 \ge \hat{r}(\pi') - 5\varepsilon / 16 \ge \expect \left[\sum_{h=1}^{H} r_h \mid \pi'\right] - 5\varepsilon / 8 \ge \expect \left[\sum_{h=1}^{H} r_h \mid \pi^*\right] - \varepsilon .
\]
\end{proof}

Our main result, Theorem~\ref{thm:main} is a direct implication of Lemma~\ref{lem:sample_complexity} and Lemma~\ref{lem:correct}.
 
\section{Discussion and Further Open Problems}
\label{sec:dis}
This work provides an episodic, tabular reinforcement learning
algorithm whose sample complexity only depends logarithmically with
the planning horizon, thus resolving the open problem proposed in
\cite{jiang2018open}.  This result is an exponential improvement on the dependency on $H$ over existing upper bounds.
Furthermore, this works applies to 
a more general setting, where we only assume the
total  reward is bounded by one, without requiring any boundedness on
instantaneous rewards (see Section~\ref{sec:rel}).

\paragraph{Conjectured Minimax Optimal Sample Complexity for Tabular RL.}
Our upper bounds have a worse dependency on $\abs{\states},
\abs{\actions}$ and $1 / \varepsilon$ compared to existing
results. One may conjecture that this suggests there is a tradeoff
between obtaining a sublinear rate (in the model size) and obtaining a
logarithmic dependence on $H$. However,
our conjecture is that this is not the case, and that the  PAC minimax-optimal sample complexity
for episodic, tabular RL (to obtain an
$\varepsilon$-optimal policy) is of the form: 
\[
\widetilde{O}\left(\frac{\abs{\states}\abs{\actions}\poly(\log  H)}{\varepsilon^2} \right).
\]
This conjecture, if true, would show that when sample complexity is measured by
the number of episodes, then the difficulty of RL is a not a function
of the horizon, and there is a sense in which RL is no more
challenging than a contextual bandit problem.

We also conjecture a minimax optimal regret bound (rather than PAC) is
of the same form.

\paragraph{Computational Efficiency.}
The computation complexity of our simulator scales  as $\abs{\Pi}$ due
to that we need to simulate all policies in the set.
The policy set we study in this paper has cardinality exponential in $\abs{\states}$ and $\abs{\actions}$.
We leave it as an open problem is to develop a polynomial time algorithm for the setting where $r_h \ge 0 $  for $h \in [H]$ and $\sum_{h=1}^H r_h \le 1$ whose sample complexity scales only logarithmically with $H$.
One possible way is to exclude sub-optimal policies in $\abs{\Pi}$ in our simulator without explicitly evaluating them.

It is also an important open question whether or not
an upper confidence bound approach achieves the same polylogarithmic
dependence in $H$. Such an algorithm would be promising for resolving
the aforementioned conjectured minimax optimal rate.

\paragraph{Generalization and Large State Space.}
While our paper shows, from sample complexity point of view, that long planning horizon is not an issue, in practice, the state space $\states$ can be huge.
Recently, many works have established that with additional assumptions, e.g. low-rankness of the transition, functions approximations for $Q$-functions, etc, the sample complexity does not depend on $\abs{\states}$~\citep{li2011knows,wen2017efficient,krishnamurthy2016pac,jiang2017contextual,dann2018oracle,du2019provably,feng2020provably,du2019q,zhong2019pac,yang2019reinforcement,jin2019provably,du2019good,roy2019Comments,lattimore2019learning,du2020agnostic,zanette2020learning}.\footnote{Here we only focus on works where agent needs to explore the environment.
There is another line of works that require a sufficiently good exploration policy, e.g., \cite{fan2019theoretical}.
	}
However, to our knowledge, the sample complexity of all these work scales polynomially with $H$ with the only exceptions to require the transition being deterministic~\citep{wen2017efficient,du2020agnostic}.
We believe a fruitful direction is to develop algorithms for these settings with sample complexity that scales only logarithmically with $H$.

\section*{Acknowledgments}
\label{sec:ack}
We thank Alekh Agarwal, Lijie Chen, Nan Jiang, Chi Jin, Yash Nair, Zihan Zhang for useful discussions.
Ruosong Wang is supported is part by NSF IIS1763562, AFRL CogDeCON
FA875018C0014, and DARPA SAGAMORE.
Simon S. Du is supported by NSF grant DMS-1638352 and the Infosys
Membership. 
Sham Kakade gratefully acknowledges funding from the Washington Research
Foundation for Innovation in Data-intensive Discover, the ONR award
N00014-18-1-2247, and NSF Award's CCF-1703574 and CCF-1740551.

\bibliographystyle{plainnat}
\bibliography{ref}

\begin{thebibliography}{35}
\providecommand{\natexlab}[1]{#1}
\providecommand{\url}[1]{\texttt{#1}}
\expandafter\ifx\csname urlstyle\endcsname\relax
  \providecommand{\doi}[1]{doi: #1}\else
  \providecommand{\doi}{doi: \begingroup \urlstyle{rm}\Url}\fi

\bibitem[Agarwal et~al.(2019)Agarwal, Kakade, and Yang]{agarwal2019optimality}
Alekh Agarwal, Sham Kakade, and Lin~F Yang.
\newblock On the optimality of sparse model-based planning for {Markov}
  decision processes.
\newblock \emph{arXiv preprint arXiv:1906.03804}, 2019.

\bibitem[Azar et~al.(2013)Azar, Munos, and Kappen]{azar2013minimax}
Mohammad~Gheshlaghi Azar, R{\'e}mi Munos, and Hilbert~J Kappen.
\newblock Minimax {PAC} bounds on the sample complexity of reinforcement
  learning with a generative model.
\newblock \emph{Machine learning}, 91\penalty0 (3):\penalty0 325--349, 2013.

\bibitem[Azar et~al.(2017)Azar, Osband, and Munos]{azar2017minimax}
Mohammad~Gheshlaghi Azar, Ian Osband, and R{\'e}mi Munos.
\newblock Minimax regret bounds for reinforcement learning.
\newblock In \emph{Proceedings of the 34th International Conference on Machine
  Learning-Volume 70}, pages 263--272. JMLR. org, 2017.

\bibitem[Dann and Brunskill(2015)]{dann2015sample}
Christoph Dann and Emma Brunskill.
\newblock Sample complexity of episodic fixed-horizon reinforcement learning.
\newblock In \emph{Advances in Neural Information Processing Systems}, pages
  2818--2826, 2015.

\bibitem[Dann et~al.(2017)Dann, Lattimore, and Brunskill]{dann2017unifying}
Christoph Dann, Tor Lattimore, and Emma Brunskill.
\newblock Unifying {PAC} and regret: Uniform {PAC} bounds for episodic
  reinforcement learning.
\newblock In \emph{Proceedings of the 31st International Conference on Neural
  Information Processing Systems}, NIPS’17, page 5717–5727, Red Hook, NY,
  USA, 2017. Curran Associates Inc.
\newblock ISBN 9781510860964.

\bibitem[Dann et~al.(2018)Dann, Jiang, Krishnamurthy, Agarwal, Langford, and
  Schapire]{dann2018oracle}
Christoph Dann, Nan Jiang, Akshay Krishnamurthy, Alekh Agarwal, John Langford,
  and Robert~E. Schapire.
\newblock On oracle-efficient {PAC} {RL} with rich observations.
\newblock In \emph{NeurIPS}, 2018.

\bibitem[Dann et~al.(2019)Dann, Li, Wei, and Brunskill]{dann2019policy}
Christoph Dann, Lihong Li, Wei Wei, and Emma Brunskill.
\newblock Policy certificates: Towards accountable reinforcement learning.
\newblock In \emph{Proceedings of the 36th International Conference on Machine
  Learning}, volume~97 of \emph{Proceedings of Machine Learning Research},
  pages 1507--1516, Long Beach, California, USA, 09--15 Jun 2019. PMLR.

\bibitem[Du et~al.(2019{\natexlab{a}})Du, Kakade, Wang, and Yang]{du2019good}
Simon~S Du, Sham~M Kakade, Ruosong Wang, and Lin~F Yang.
\newblock Is a good representation sufficient for sample efficient
  reinforcement learning?
\newblock \emph{arXiv preprint arXiv:1910.03016}, 2019{\natexlab{a}}.

\bibitem[Du et~al.(2019{\natexlab{b}})Du, Krishnamurthy, Jiang, Agarwal,
  Dud{\'\i}k, and Langford]{du2019provably}
Simon~S Du, Akshay Krishnamurthy, Nan Jiang, Alekh Agarwal, Miroslav
  Dud{\'\i}k, and John Langford.
\newblock Provably efficient {RL} with rich observations via latent state
  decoding.
\newblock \emph{arXiv preprint arXiv:1901.09018}, 2019{\natexlab{b}}.

\bibitem[Du et~al.(2019{\natexlab{c}})Du, Luo, Wang, and Zhang]{du2019q}
Simon~S Du, Yuping Luo, Ruosong Wang, and Hanrui Zhang.
\newblock Provably efficient {Q}-learning with function approximation via
  distribution shift error checking oracle.
\newblock In \emph{Advances in Neural Information Processing Systems}, pages
  8058--8068, 2019{\natexlab{c}}.

\bibitem[Du et~al.(2020)Du, Lee, Mahajan, and Wang]{du2020agnostic}
Simon~S Du, Jason~D Lee, Gaurav Mahajan, and Ruosong Wang.
\newblock Agnostic {Q}-learning with function approximation in deterministic
  systems: Tight bounds on approximation error and sample complexity.
\newblock \emph{arXiv preprint arXiv:2002.07125}, 2020.

\bibitem[Fan et~al.(2019)Fan, Wang, Xie, and Yang]{fan2019theoretical}
Jianqing Fan, Zhaoran Wang, Yuchen Xie, and Zhuoran Yang.
\newblock A theoretical analysis of deep {Q}-learning.
\newblock \emph{arXiv preprint arXiv:1901.00137}, 2019.

\bibitem[Feng et~al.(2020)Feng, Wang, Yin, Du, and Yang]{feng2020provably}
Fei Feng, Ruosong Wang, Wotao Yin, Simon~S Du, and Lin~F Yang.
\newblock Provably efficient exploration for {RL} with unsupervised learning.
\newblock \emph{arXiv preprint arXiv:2003.06898}, 2020.

\bibitem[Fonteneau et~al.(2013)Fonteneau, Murphy, Wehenkel, and
  Ernst]{fonteneau2013batch}
Raphael Fonteneau, Susan~A Murphy, Louis Wehenkel, and Damien Ernst.
\newblock Batch mode reinforcement learning based on the synthesis of
  artificial trajectories.
\newblock \emph{Annals of operations research}, 208\penalty0 (1):\penalty0
  383--416, 2013.

\bibitem[Jiang and Agarwal(2018)]{jiang2018open}
Nan Jiang and Alekh Agarwal.
\newblock Open problem: The dependence of sample complexity lower bounds on
  planning horizon.
\newblock In \emph{Conference On Learning Theory}, pages 3395--3398, 2018.

\bibitem[Jiang et~al.(2017)Jiang, Krishnamurthy, Agarwal, Langford, and
  Schapire]{jiang2017contextual}
Nan Jiang, Akshay Krishnamurthy, Alekh Agarwal, John Langford, and Robert~E
  Schapire.
\newblock Contextual decision processes with low {Bellman} rank are
  {PAC}-learnable.
\newblock In \emph{Proceedings of the 34th International Conference on Machine
  Learning-Volume 70}, pages 1704--1713. JMLR. org, 2017.

\bibitem[Jin et~al.(2018)Jin, Allen-Zhu, Bubeck, and Jordan]{jin2018q}
Chi Jin, Zeyuan Allen-Zhu, Sebastien Bubeck, and Michael~I Jordan.
\newblock Is {Q}-learning provably efficient?
\newblock In \emph{Advances in Neural Information Processing Systems}, pages
  4863--4873, 2018.

\bibitem[Jin et~al.(2019)Jin, Yang, Wang, and Jordan]{jin2019provably}
Chi Jin, Zhuoran Yang, Zhaoran Wang, and Michael~I Jordan.
\newblock Provably efficient reinforcement learning with linear function
  approximation.
\newblock \emph{arXiv preprint arXiv:1907.05388}, 2019.

\bibitem[Kakade(2003)]{kakade2003sample}
Sham~M Kakade.
\newblock \emph{On the sample complexity of reinforcement learning}.
\newblock PhD thesis, University of London London, England, 2003.

\bibitem[Kearns and Singh(1999)]{kearns1999finite}
Michael~J Kearns and Satinder~P Singh.
\newblock Finite-sample convergence rates for {Q}-learning and indirect
  algorithms.
\newblock In \emph{Advances in neural information processing systems}, pages
  996--1002, 1999.

\bibitem[Kearns et~al.(2000)Kearns, Mansour, and Ng]{kearns2000approximate}
Michael~J Kearns, Yishay Mansour, and Andrew~Y Ng.
\newblock Approximate planning in large {POMDPs} via reusable trajectories.
\newblock In \emph{Advances in Neural Information Processing Systems}, pages
  1001--1007, 2000.

\bibitem[Krishnamurthy et~al.(2016)Krishnamurthy, Agarwal, and
  Langford]{krishnamurthy2016pac}
Akshay Krishnamurthy, Alekh Agarwal, and John Langford.
\newblock {PAC} reinforcement learning with rich observations.
\newblock In \emph{Advances in Neural Information Processing Systems}, pages
  1840--1848, 2016.

\bibitem[Lattimore and Szepesvari(2019)]{lattimore2019learning}
Tor Lattimore and Csaba Szepesvari.
\newblock Learning with good feature representations in bandits and in {RL}
  with a generative model.
\newblock \emph{ArXiv}, abs/1911.07676, 2019.

\bibitem[Li et~al.(2011)Li, Littman, Walsh, and Strehl]{li2011knows}
Lihong Li, Michael~L Littman, Thomas~J Walsh, and Alexander~L Strehl.
\newblock Knows what it knows: a framework for self-aware learning.
\newblock \emph{Machine learning}, 82\penalty0 (3):\penalty0 399--443, 2011.

\bibitem[Osband and Van~Roy(2016)]{osband2016on}
Ian Osband and Benjamin Van~Roy.
\newblock On lower bounds for regret in reinforcement learning.
\newblock \emph{ArXiv}, abs/1608.02732, 2016.

\bibitem[Osband and Van~Roy(2017)]{osband2017posterior}
Ian Osband and Benjamin Van~Roy.
\newblock Why is posterior sampling better than optimism for reinforcement
  learning?
\newblock In \emph{Proceedings of the 34th International Conference on Machine
  Learning-Volume 70}, pages 2701--2710. JMLR. org, 2017.

\bibitem[Sidford et~al.(2018{\natexlab{a}})Sidford, Wang, Wu, Yang, and
  Ye]{sidford2018near}
Aaron Sidford, Mengdi Wang, Xian Wu, Lin Yang, and Yinyu Ye.
\newblock Near-optimal time and sample complexities for solving {Markov}
  decision processes with a generative model.
\newblock In \emph{Advances in Neural Information Processing Systems}, pages
  5186--5196, 2018{\natexlab{a}}.

\bibitem[Sidford et~al.(2018{\natexlab{b}})Sidford, Wang, Wu, and
  Ye]{sidford2018variance}
Aaron Sidford, Mengdi Wang, Xian Wu, and Yinyu Ye.
\newblock Variance reduced value iteration and faster algorithms for solving
  markov decision processes.
\newblock In \emph{Proceedings of the Twenty-Ninth Annual ACM-SIAM Symposium on
  Discrete Algorithms}, pages 770--787. Society for Industrial and Applied
  Mathematics, 2018{\natexlab{b}}.

\bibitem[Singh and Yee(1994)]{singh1994upper}
Satinder~P Singh and Richard~C Yee.
\newblock An upper bound on the loss from approximate optimal-value functions.
\newblock \emph{Machine Learning}, 16\penalty0 (3):\penalty0 227--233, 1994.

\bibitem[Van~Roy and Dong(2019)]{roy2019Comments}
Benjamin Van~Roy and Shi-Hai Dong.
\newblock Comments on the {Du-Kakade-Wang-Yang} lower bounds.
\newblock \emph{ArXiv}, abs/1911.07910, 2019.

\bibitem[Wen and Van~Roy(2017)]{wen2017efficient}
Zheng Wen and Benjamin Van~Roy.
\newblock Efficient reinforcement learning in deterministic systems with value
  function generalization.
\newblock \emph{Math. Oper. Res.}, 42\penalty0 (3):\penalty0 762--782, 2017.
\newblock ISSN 0364-765X.
\newblock URL \url{https://doi.org/10.1287/moor.2016.0826}.

\bibitem[Yang and Wang(2019)]{yang2019reinforcement}
Lin~F Yang and Mengdi Wang.
\newblock Reinforcement leaning in feature space: Matrix bandit, kernels, and
  regret bound.
\newblock \emph{arXiv preprint arXiv:1905.10389}, 2019.

\bibitem[Zanette and Brunskill(2019)]{zanette2019tighter}
Andrea Zanette and Emma Brunskill.
\newblock Tighter problem-dependent regret bounds in reinforcement learning
  without domain knowledge using value function bounds.
\newblock In \emph{International Conference on Machine Learning}, pages
  7304--7312, 2019.

\bibitem[Zanette et~al.(2020)Zanette, Lazaric, Kochenderfer, and
  Brunskill]{zanette2020learning}
Andrea Zanette, Alessandro Lazaric, Mykel Kochenderfer, and Emma Brunskill.
\newblock Learning near optimal policies with low inherent {Bellman} error.
\newblock page arXiv preprint arXiv:2003.00153, 2020.

\bibitem[Zhong et~al.(2019)Zhong, Deshmukh, and Scott]{zhong2019pac}
Yuren Zhong, Aniket~Anand Deshmukh, and Clayton Scott.
\newblock {PAC} reinforcement learning without real-world feedback.
\newblock \emph{ArXiv}, abs/1909.10449, 2019.

\end{thebibliography}

\newpage
\appendix

\end{document}